\documentclass[runningheads]{llncs}

\PassOptionsToPackage{table}{xcolor}
\usepackage[final]{eccv}

\usepackage{eccvabbrv}
\usepackage{graphicx}
\usepackage{booktabs}
\usepackage{amsmath,amssymb}
\usepackage{multirow}
\usepackage{microtype}
\usepackage[accsupp]{axessibility}
\usepackage[hidelinks]{hyperref}
\hypersetup{pdftitle={FILT3R: Latent State Adaptive Kalman Filter for Streaming 3D Reconstruction}, pdfauthor={Seonghyun Jin and Jong Chul Ye}}
\usepackage{orcidlink}
\usepackage{xcolor}
\usepackage{pgfplots}
\pgfplotsset{compat=1.18}
\usepackage{capt-of}
\usepackage{tikz}
\usepackage[section]{placeins}
\usepackage{algorithm}
\usepackage{algpseudocode}
\usetikzlibrary{arrows.meta, calc, shapes.geometric}
\newcommand{\prior}{\(\dagger\)}
\makeatletter

\usetikzlibrary{arrows.meta, positioning, fit, calc}

\makeatother

\definecolor{oursbg}{RGB}{232,242,255}

\newcommand{\oursrow}{\rowcolor{oursbg}}
\newcommand{\oursname}{\textbf{FILT3R (ours)}}

\definecolor{lightred}{RGB}{255, 235, 235}

\newcommand{\appendixindexentry}[2]{%
\noindent\begin{tabular*}{\linewidth}{@{}p{0.88\linewidth}@{\extracolsep{\fill}}r@{}}
\hyperref[#1]{#2} & \pageref{#1}
\end{tabular*}\par}
\newcommand{\appendixsubindexentry}[2]{%
\noindent\hspace*{1.5em}\begin{tabular*}{0.95\linewidth}{@{}p{0.84\linewidth}@{\extracolsep{\fill}}r@{}}
\hyperref[#1]{#2} & \pageref{#1}
\end{tabular*}\par}

\begin{document}

\title{FILT3R: Latent State Adaptive Kalman Filter \texorpdfstring{\\}{ } for Streaming 3D Reconstruction}
\titlerunning{FILT3R}

\author{Seonghyun Jin\inst{1}\orcidlink{0009-0007-8952-5178} \and
Jong Chul Ye\inst{1}\orcidlink{0000-0001-9763-9609}\thanks{Corresponding author.}}

\authorrunning{S. Jin and JC. Ye}

\institute{Kim Jaechul Graduate School of AI, KAIST, Seoul, South Korea\\
\email{jinotter3@kaist.ac.kr, jong.ye@kaist.ac.kr}\\
\url{https://github.com/jinotter3/FILT3R}}

\maketitle

\begin{abstract}
Streaming 3D reconstruction maintains a persistent latent state that is updated online from incoming frames,
enabling constant-memory inference.
A key failure mode is the state update rule: aggressive overwrites forget useful history,
while conservative updates fail to track new evidence,
and both behaviors become unstable beyond the training horizon.
To address this challenge, we propose \textbf{FILT3R}, a training-free latent filtering layer that casts recurrent state updates
as stochastic state estimation in token space.
FILT3R maintains a per-token variance and computes a Kalman-style gain that adaptively balances
memory retention against new observations.
Process noise -- governing how much the latent state is expected to change between frames -- is
estimated online from EMA-normalized temporal drift of candidate tokens.
Using extensive experiments, we demonstrate that FILT3R yields an interpretable, plug-in update rule that generalizes common overwrite
and gating policies as special cases.
Specifically, we show that
gains shrink in stable regimes as uncertainty contracts with accumulated evidence,
and rise when genuine scene change increases process uncertainty,
improving long-horizon stability for depth, pose, and 3D reconstruction, compared to the existing methods.
\keywords{Streaming 3D Reconstruction \and State Estimation \and Kalman Filtering \and Length Generalization}
\end{abstract}

\section{Introduction}

\begin{figure}[t]
\centering
\includegraphics[width=\linewidth]{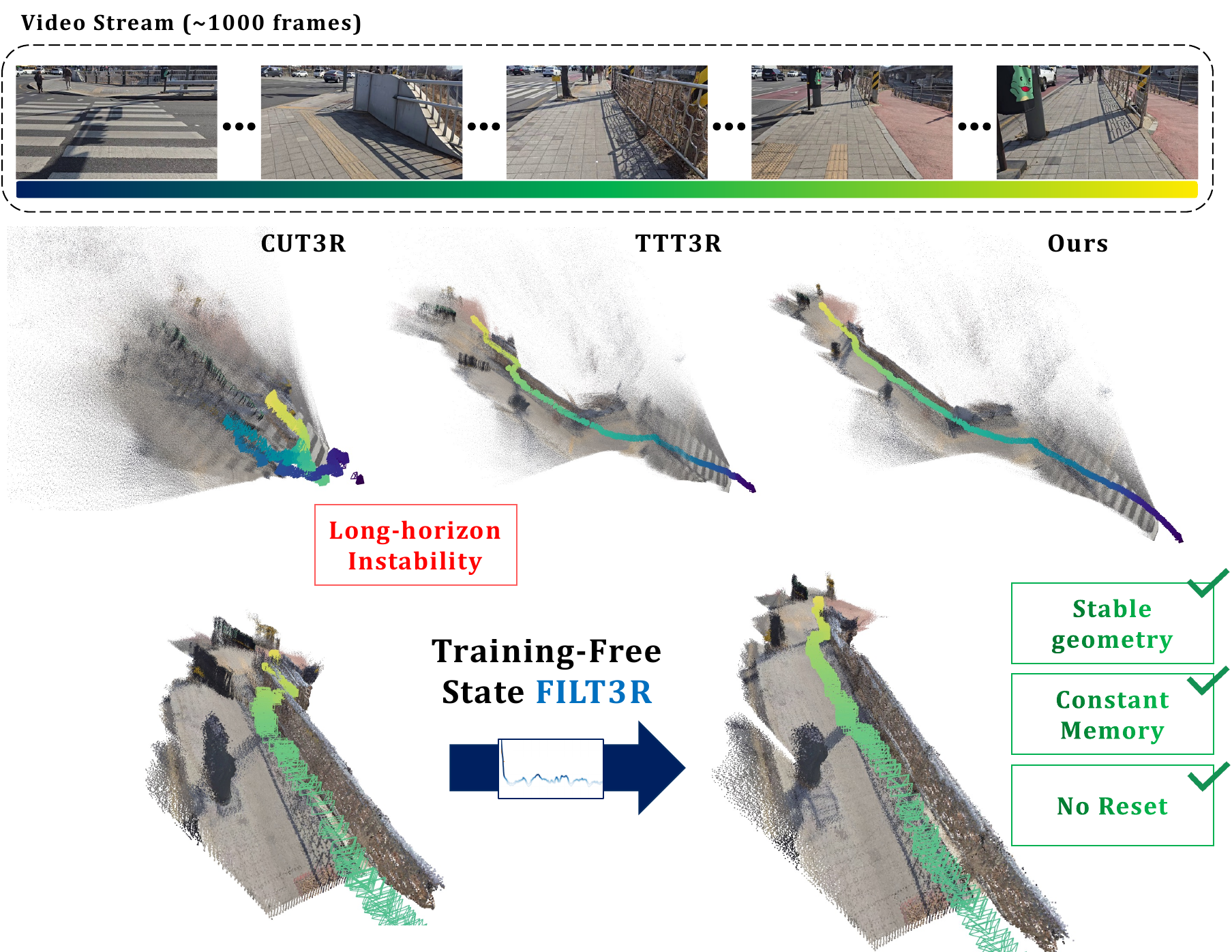}
\caption{\textbf{FILT3R enables stable long-horizon streaming 3D reconstruction without resets.}
Top: dense input frames from a long video stream.
Middle: final reconstructions from the same frozen streaming backbone under different latent-state update rules.
Uniform overwrite (CUT3R) and heuristic gating (TTT3R) accumulate drift and/or forget previously integrated evidence, leading to long-horizon instability and degraded geometry.
Bottom: a zoom-in around frames beyond 600 highlights that FILT3R preserves stable geometry over long rollouts.
Trajectory colors indicate time.}
\label{fig:main}
\end{figure}

Streaming 3D reconstruction aims to predict geometry and camera motion from an image stream
while using constant memory and low latency.
Recent recurrent architectures achieve this by maintaining a compact set of persistent latent tokens, which distill the scene's context and evolve dynamically as new frames arrive~\cite{wang2025cut3r}.
In practice, performance often degrades when the sequence length exceeds the training horizon:
small update biases accumulate into drift, and the latent state gradually loses information
that is no longer supported by the most recent frame.
Since a new candidate latent state should be estimated from the current frame and the
previous state,
the system must then decide how much to trust the candidate (fast adaptation) versus the
previous state (long-term consistency).

Existing streaming update rules are mostly
 manually engineered to manage this trade-off.
 Uniform overwrites (\eg, CUT3R~\cite{wang2025cut3r}) always trust the new candidate, which can lead to
catastrophic forgetting.
While training-free gates based on attention statistics (\eg, TTT3R~\cite{chen2026ttt3r}) improve stability, they remain largely heuristic; they scale updates without an explicit representation of uncertainty or a principled mechanism for adaptation as confidence increases.
Mathematically, existing streaming updates can be written as
$\mathbf{s}_t=(1-\boldsymbol\beta_t)\odot\mathbf{s}_{t-1}+\boldsymbol\beta_t\odot\tilde{\mathbf{s}}_t$,
where $\boldsymbol\beta_t\in[0,1]^N$ determines how quickly past evidence is forgotten.
Uniform overwrite ($\boldsymbol\beta_t=\mathbf{1}$) yields a one-step memory,
while fixed-gain gates impose a constant forgetting half-life regardless of stream stability.

Inspired by this formulation,
we argue that the recurrent state in streaming reconstruction is naturally a \emph{belief state},
and that the candidate state produced by the decoder is a noisy \emph{measurement} of that belief.
This viewpoint leads directly to our novel adaptive filtering formulation, which we call \textbf{FILT3R}.
In particular,
FILT3R formulates the state update as a
Kalman-style interpolation whose gain is governed by two uncertainties:
(i)~\emph{process noise}---how much the underlying scene latent is expected to change between frames---and
(ii)~\emph{measurement noise}---how reliable the decoder's current prediction is.
Specifically, FILT3R adopts an adaptive Kalman-style filtering (AKF) framework~\cite{mehra1970akf} with a fixed measurement model. Adaptivity is centered on the process model, allowing Kalman-style gains to attenuate as tokens gain confidence during stable periods -- thereby extending the effective memory horizon -- and increase in response to process noise that indicates a genuine scene change.
More specifically, FILT3R uses a \emph{fixed} scalar measurement noise $r$
and estimates only the \emph{process noise} adaptively from temporal drift in the latent space.
This formulation not only reduces computational complexity but also circumvents a critical failure mode, as validated by our empirical findings.

Our contribution can be summarized as follows:
\begin{itemize}
\item[(i)]
We recast streaming 3D reconstruction as stochastic state estimation in latent token space, and
 introduce \textbf{FILT3R}, an adaptive latent filtering layer for streaming 3D reconstruction.
 \item[(ii)]
 FILT3R is a training-free, lightweight, and interpretable plug-and-play filtering approach with
adaptive process noise estimated from internal model signals and fixed measurement noise,
following the classical AKF paradigm.
\item[(iii)] We provide extensive evaluation across short- and long-horizon streaming
depth, pose, and reconstruction settings,
and analyze how uncertainty-aware gains improve length generalization. The results
confirm that FILT3R consistently outperforms prior streaming baselines and improves long-horizon stability across metrics.
\end{itemize}

\section{Related Work}

\paragraph{Streaming and recurrent 3D reconstruction.}
Classical and learning-based 3D streaming pipelines have progressed from short-horizon depth/pose models
to online recurrent reconstruction. Representative geometric estimators include
RAFT-style correlation, DeepV2D, and DROID-SLAM~\cite{teed2020raft,teed2020deepv2d,teed2021droidslam}.
For online scene understanding, methods such as iMAP, NICE-SLAM, Co-SLAM, Point-SLAM, and MASt3R-SLAM
maintain a persistent latent map updated as frames arrive~\cite{sucar2021imap,zhu2022niceslam,wang2023coslam,sandstrom2023pointslam,murai2025mast3rslam}.
Recent foundation-style 3D models further improve feed-forward reconstruction quality,
from DUSt3R/MASt3R to larger multi-view architectures including
VGGT and its follow-ups~\cite{wang2024dust3r,leroy2024mast3r,wang2025vggt,wang2025pi3,lin2025depthanything3,yang2025fast3r,shen2025fastvggt,deng2025vggtlong}.
Streaming variants such as Spann3R, CUT3R, SLAM3R, Point3R, Stream3R, StreamVGGT, and WinT3R
explicitly target long-horizon online inference under bounded memory~\cite{wang2024spann3r,wang2025cut3r,liu2025slam3r,wu2025point3r,lan2025stream3r,zhuo2025streamvggt,li2025wint3r}.

A central design choice  is the state update rule in recurrent memories.
CUT3R uses uniform overwrite~\cite{wang2025cut3r}, while recent training-free methods
introduce adaptive gating signals from attention or motion cues, including
TTT3R, TTSA3R, and MUT3R~\cite{chen2026ttt3r,zheng2026ttsa3r,shen2025mut3r}.
In parallel, test-time adaptation methods update model parameters online, \eg, test-time
training and entropy minimization~\cite{sun2020ttt,wang2021tent}.

In streaming 3D reconstruction, errors compound with rollout length, causing drift and forgetting
beyond the training horizon. This failure mode is visible in overwrite-based and heuristic-gated
recurrent updates~\cite{wang2025cut3r,chen2026ttt3r,zheng2026ttsa3r}.
Our goal is to address this by explicitly propagating uncertainty so gains decay in stable periods
and rise only when estimated process uncertainty indicates genuine scene change.

\paragraph{Uncertainty estimation and filtering.}
Kalman-style filtering provides a principled framework for sequential estimation under linear-Gaussian assumptions~\cite{kalman1960}.
Adaptive Kalman-style filtering estimates noise covariances online when dynamics are non-stationary~\cite{mehra1970akf}.
Neural and differentiable variants include Deep Kalman Filters and Backprop KF~\cite{krishnan2015deepkf,haarnoja2016backpropkf}.
We adopt this perspective for latent token memories, using adaptive process noise and propagated uncertainty
to derive per-token gains while keeping measurement noise fixed.

\section{FILT3R}

\begin{figure}[t]
    \centering
    \includegraphics[width=0.86\linewidth]{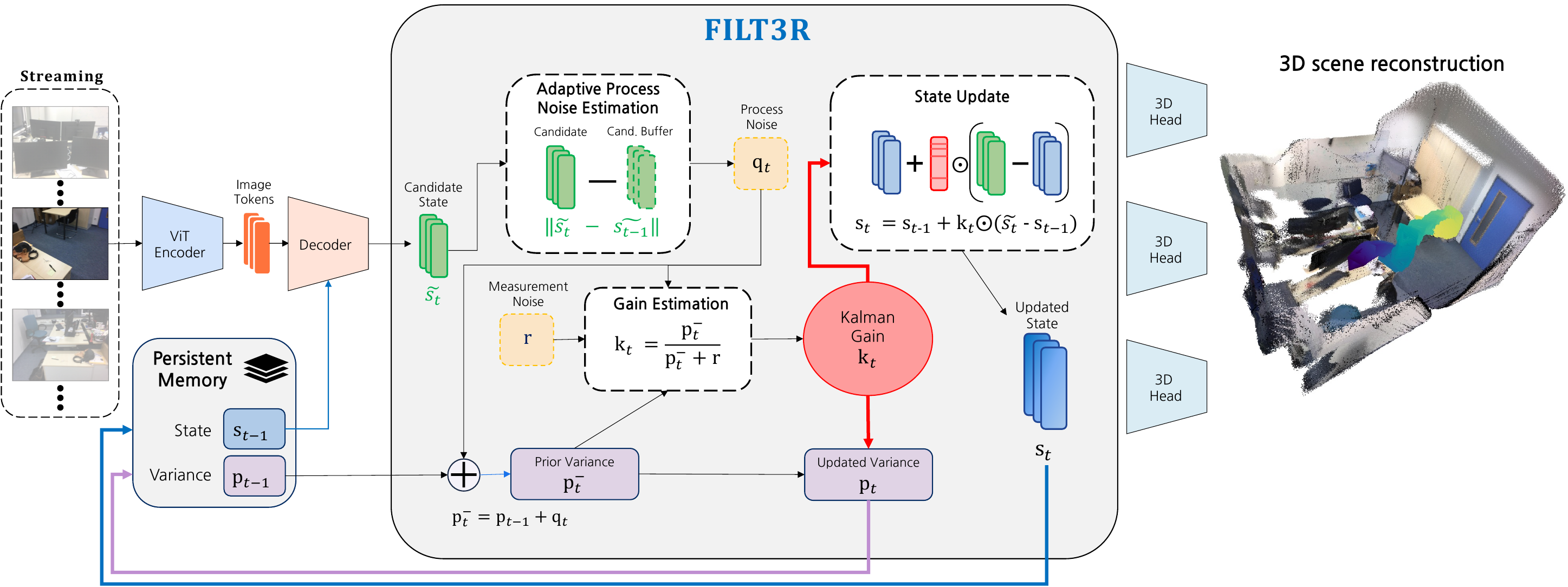}
    \caption{\textbf{FILT3R overview.}
    The persistent memory $\mathbf{s}_{t-1}$ is fused with the decoder's candidate
    $\tilde{\mathbf{s}}_t$ via a token-wise Kalman-style gain $\mathbf{k}_t$.
    Process noise $\mathbf{q}_t$ is estimated adaptively from EMA-normalized temporal drift,
    while measurement noise $r$ is a scalar hyperparameter shared across tokens.
    Variance $\mathbf{p}_t$ is propagated across time steps,
    enabling gains that naturally shrink in stable regimes and increase during scene change.}
    \label{fig:method}
\end{figure}

\subsection{Problem formulation}
\label{sec:problem}

We consider a recurrent streaming reconstruction model that maintains persistent state tokens
$\mathbf{s}_t \in \mathbb{R}^{N \times D}$ and, given an incoming frame at time $t$,
produces a candidate state token $\tilde{\mathbf{s}}_t \in \mathbb{R}^{N \times D}$ using a decoder that attends to image tokens
and the previous state~\cite{wang2025cut3r}. Here, $N$ and $D$ refer to the number of tokens and token dimension, respectively.

Specifically, we interpret $\mathbf{s}_t$ as a \emph{belief} over the scene latent at time $t$ and
$\tilde{\mathbf{s}}_t$ as a noisy \emph{measurement} of that belief produced by the decoder.
This suggests modeling the dynamics with a stochastic state-space model in token space:
\begin{align}
\mathbf{s}_t &= \mathbf{s}_{t-1} + \mathbf{w}_t,
  \qquad \mathbf{w}_t \sim \mathcal{N}(\mathbf{0}, \mathbf{Q}_t), \label{eq:ssm_process}\\
\tilde{\mathbf{s}}_t &= \mathbf{s}_t + \mathbf{v}_t,
  \qquad \mathbf{v}_t \sim \mathcal{N}(\mathbf{0}, \mathbf{R}), \label{eq:ssm_meas}
\end{align}
where $\mathbf{Q}_t$ models uncertainty in latent evolution (process noise) and
$\mathbf{R}$ models uncertainty in the decoder's prediction (measurement noise).

Streaming video is non-stationary: the scene can change rapidly during motion and remain
static during pauses.
We therefore adopt an \emph{adaptive Kalman-style filtering} (AKF)
approach~\cite{mehra1970akf},
estimating process noise $\mathbf{Q}_t$ online from the model's internal representations.

A key design choice in FILT3R is to keep measurement noise \emph{fixed} using
a single scalar shared across tokens, {\em i.e.}
$\mathbf{R} = r\,\mathbf{I}$.
The rationale is that the decoder is a frozen pretrained network: its prediction quality is
determined by its architecture and training, not by the current scene dynamics.
In classical AKF, this corresponds to the standard assumption that the sensor noise is
known~\cite{mehra1970akf}.

In contrast, we found that making both $\mathbf{Q}_t$ and $\mathbf{R}_t$ adaptive introduces the risk of
correlated noise estimates: scene transitions simultaneously increase temporal drift (which
feeds $\mathbf{Q}_t$) and attention novelty (which would feed $\mathbf{R}_t$).
When both spike together, the gain $\mathbf{k}_t$ can overshoot, producing the same instability
that heuristic gates suffer from.
A fixed $r$ in our FILT3R provides a stable anchor, and all adaptivity enters through the process noise.

\subsection{Update rule}
\label{sec:update}

We assume a diagonal per-token variance $\mathbf{p}_t \in \mathbb{R}^{N}$
(one scalar per token), approximating each token's covariance as isotropic.
Equivalently, we use diagonal covariance approximations
$\mathbf{Q}_t=\mathrm{diag}(\mathbf{q}_t)$ and $\mathbf{P}_t=\mathrm{diag}(\mathbf{p}_t)$,
which yields elementwise Kalman updates.
Then, the variance prediction follows the standard Kalman prediction step:
\begin{equation}
\mathbf{p}^{-}_t = \mathbf{p}_{t-1} + \mathbf{q}_t,
\label{eq:p_pred}
\end{equation}
where $\mathbf{q}_t \in \mathbb{R}^N$ is the per-token process noise
estimated online (\cref{sec:adaptive_q}),
and
the Kalman-style gain is given by
\begin{equation}
\mathbf{k}_t = \frac{\mathbf{p}^{-}_t}{\mathbf{p}^{-}_t + r },
\label{eq:kalman_gain}
\end{equation}

Consequently, the state update according to AKF is a gain-weighted interpolation:
\begin{equation}
\mathbf{s}_t = \mathbf{s}_{t-1} + \mathbf{k}_t \odot
  (\tilde{\mathbf{s}}_t - \mathbf{s}_{t-1}).
\label{eq:state_update_main}
\end{equation}
and  the variance is updated using the scalar Joseph form:
\begin{equation}
\mathbf{p}_t = (\mathbf{1}-\mathbf{k}_t)^2 \odot \mathbf{p}^{-}_t
             + r\mathbf{k}_t^2.
\label{eq:p_update}
\end{equation}
where $\odot$ refers to the Hadamard product,
and $\mathbf{k}_t^2= \mathbf{k}_t\odot\mathbf{k}_t$.

\paragraph{Uncertainty modeling of latent evolution.}
\label{sec:adaptive_q}

Given the update rules for $\mathbf{s}_t, \mathbf{p}_t$, AKF formulation
requires the process noise variance that
captures how much the scene latent is expected to change between frames.
In a static scene, $\mathbf{q}_t$ should be small; during rapid motion or scene transitions,
$\mathbf{q}_t$ should be large.
FILT3R estimates this from the temporal drift of consecutive candidate states, using a
per-stream EMA baseline to calibrate the scale.

Specifically,
we model the $i$-th token process noise through a sigmoid gate:
\begin{equation}
q_{t,i} = q_{\min} + (q_{\max} - q_{\min}) \cdot
  \sigma\!\big(\alpha_q(g_{t,i} - \tau_q)\big),
\label{eq:q_gate_main}
\end{equation}
where $q_{\min}$, $q_{\max}$ bound the process noise,
$\alpha_q$ controls the sharpness of the transition,
and $\tau_q$ is the midpoint,
and $g_{t,i}$ refers to the normalized drift score with respect to the running EMA baseline of the stream-level mean drift
$\hat{\Delta}_t$:
\begin{align}
g_{t,i} &= \Delta_{t,i}/\hat{\Delta}_t  \label{eq:g} \\
 \hat{\Delta}_t & = (1-\lambda_\Delta)\,\hat{\Delta}_{t-1}
              + \lambda_\Delta \bar{\Delta}_t  \label{eq:ema_drift},
\end{align}
where  $\lambda_\Delta$ is the EMA rate and
\begin{equation}
\bar{\Delta}_t :=\sum_i \Delta_{t,i}/N,  \quad\mbox{where}\quad
\Delta_{t,i} = \|\tilde{\mathbf{s}}_{t,i} - \tilde{\mathbf{s}}_{t-1,i}\|_2.
\label{eq:drift}
\end{equation}
The normalized drift score  measures whether the current drift is high or low \emph{relative to the stream's
running baseline}.

Intuitively, when the drift is typical (near the EMA baseline), the sigmoid is near its midpoint
and $q_{t,i}$ takes a moderate value;
when the drift is unusually large, $q_{t,i}$ approaches $q_{\max}$, signaling rapid scene change;
when the drift is small, $q_{t,i}$ approaches $q_{\min}$, indicating stability.

\paragraph{Implementation issue.}

For numerical stability, we add small $\epsilon$ to the denominators
of \eqref{eq:kalman_gain} and \eqref{eq:g} to avoid the division by a near zero value.
The gain in \eqref{eq:kalman_gain} is additionally  clamped to  $[k_{\min} , k_{\max} ]$ to prevent complete freezing or full overwrite.
Similarly,
the EMA baseline is clamped to a floor $\Delta_{\mathrm{floor}}$ to avoid division instability.

The overhead of our AKF formulation is minimal: one $N$-dimensional vector of token variances, one per-stream
EMA scalar for normalizing temporal drift, and one $N\times D$ buffer for the previous candidate.

\subsection{Filtering as adaptive token retention}
\label{sec:memory_horizon}

The Kalman-style gain $\mathbf{k}_t$ in \cref{eq:state_update_main} acts as a per-token update
coefficient $\boldsymbol{\beta}_t := \mathbf{k}_t \in [0,1]^N$, yielding an
exponential-smoothing form
$\mathbf{s}_t = (1-\boldsymbol{\beta}_t)\odot\mathbf{s}_{t-1}
+ \boldsymbol{\beta}_t \odot \tilde{\mathbf{s}}_t$.
Unlike overwrite ($\boldsymbol{\beta}_t \!\equiv\! \mathbf{1}$) or fixed/heuristic gates,
FILT3R propagates uncertainty
(\cref{eq:p_pred,eq:p_update}), so gains \emph{naturally shrink} as the state becomes
confident in stable regimes, extending the effective memory horizon,
while still producing gain increases when inferred process noise favors adaptation.

More specifically,
the Kalman-style gain (\cref{eq:kalman_gain}) is entirely driven by the ratio
$\mathbf{p}^{-}_t / (\mathbf{p}^{-}_t + r)$.
In stable regimes, $\mathbf{q}_t$ stays near $q_{\min}$, so
$\mathbf{p}^{-}_t$ gradually decreases as the Joseph update
(\cref{eq:p_update}) shrinks the variance, causing $\mathbf{k}_t$ to decay and
the effective memory horizon to grow.
During scene transitions, $\mathbf{q}_t$ spikes,
$\mathbf{p}^{-}_t$ increases, and $\mathbf{k}_t$ rises, allowing the filter to rapidly
integrate new evidence.
With $\mathbf{q}_t$ driving $\mathbf{p}_t^-$, the gain in \cref{eq:kalman_gain} increases under large drift
and decreases as uncertainty contracts in stable regimes.

This formulation clarifies the relationship to prior update rules:
\begin{itemize}
\item \emph{Uniform overwrite} (CUT3R): $\boldsymbol\beta_t \equiv \mathbf{1}$.
  Equivalent to $r = 0$ (infinite trust in the measurement).
\item \emph{Heuristic gates} (TTT3R): $\boldsymbol\beta_t$ computed from instantaneous
  attention/change statistics without uncertainty propagation.
  Equivalent to resetting variance to a constant each step.
\item \emph{Fixed interpolation}: $\boldsymbol\beta_t \equiv \beta$.
  A single point on the steady-state $q/r$ curve (see Appendix~\ref{app:memory_horizon}).
\end{itemize}

In fact, in FILT3R,  $\boldsymbol\beta_t$ is derived via recursive uncertainty updates
  (\cref{eq:p_pred,eq:kalman_gain,eq:p_update}). This
  yields Kalman-style gains that decay at a rate of  $\mathcal{O}(1/t)$ in static scenes
  (see Appendix~\ref{app:memory_horizon}, Prop.~1) but spike in response to process noise signaling a scene change.
  To support this mechanism, Appendix~\ref{app:memory_horizon} provides a formal unrolling demonstrating that the effective memory horizon expands as evidence accumulates. It further includes a rigorous proof of the $1/t$ gain decay in static environments and a proposition establishing the relationship between the steady-state gain and the noise ratio $q/r$.

\section{Experiments}
\label{sec:experiments}

\begin{figure}[!t]
    \centering
    \includegraphics[width=0.98\linewidth]{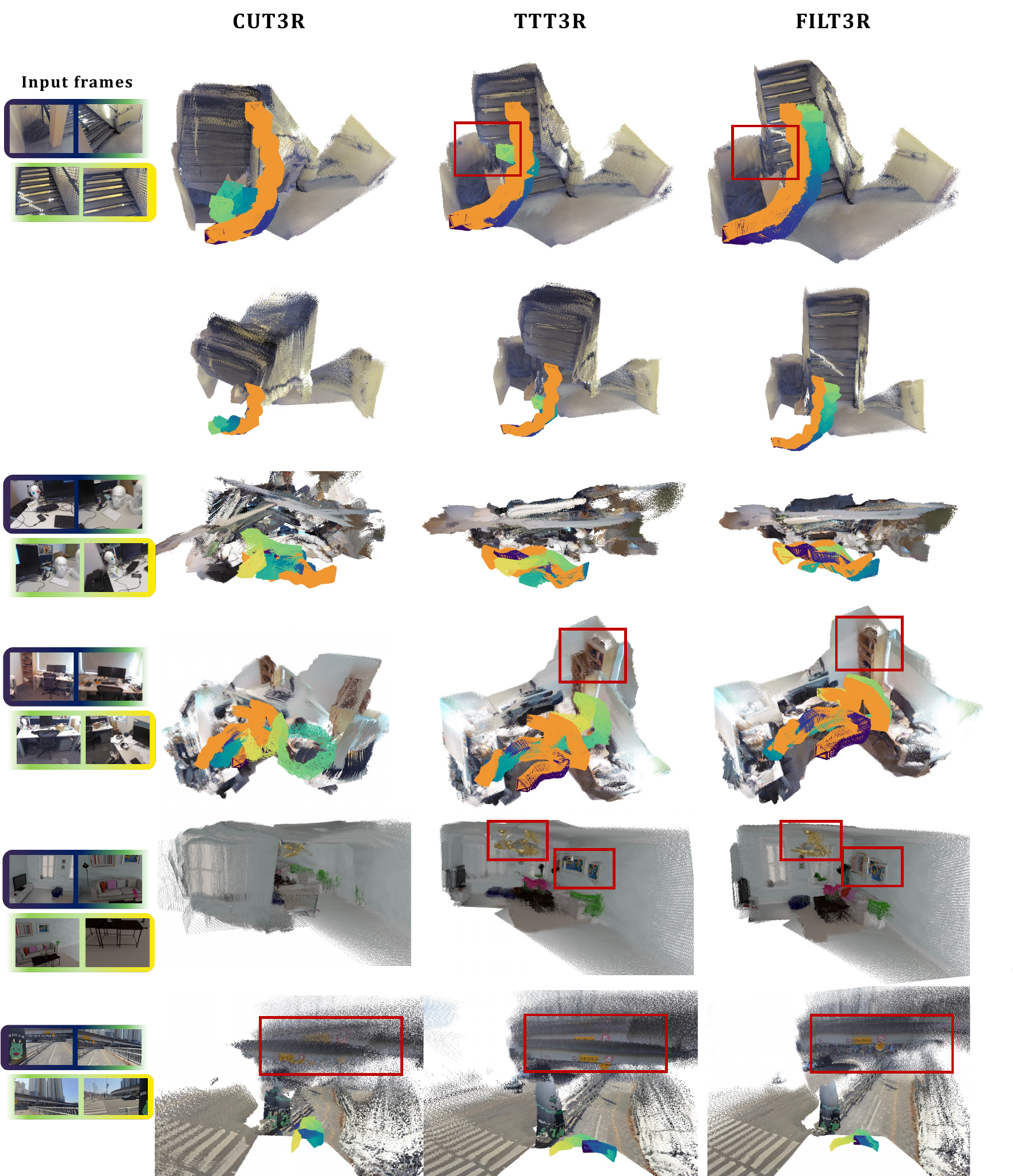}
    \caption{\textbf{Qualitative long-horizon streaming 3D reconstruction.}
CUT3R often suffers from catastrophic forgetting and drift, while TTT3R reduces but does not eliminate long-horizon instability, leading to fragmented surfaces and inconsistent revisited regions (red boxes).
FILT3R produces more coherent geometry over long rollouts.
Ground-truth trajectory is shown in orange; the predicted trajectory is color-coded by time.}
    \label{fig:main_qualitative}
\end{figure}

\subsection{Setup}

\paragraph{Model and update rules.}
We evaluate FILT3R as a drop-in replacement for the state update in a CUT3R-style recurrent architecture~\cite{wang2025cut3r}. All methods share identical backbone weights; only the online update policy differs. FILT3R uses the token-wise filtering update in \cref{eq:state_update_main} with adaptive process noise \cref{eq:q_gate_main} and gain computation in \cref{eq:kalman_gain}.

\paragraph{Tasks and metrics.}
We report (i) video depth estimation with Abs Rel, $\delta{<}1.25$, and log RMSE, (ii) camera pose estimation with ATE, RPE-translation, and RPE-rotation, and (iii) 3D reconstruction with Accuracy, Completeness, and Normal Consistency. For depth, we report both metric-scale and per-sequence scale-aligned metrics following common practice. For long-horizon benchmarks, we additionally visualize error versus sequence length to characterize drift.

\paragraph{Datasets.}
Short-horizon depth is evaluated on Sintel~\cite{butler2012sintel}, Bonn~\cite{palazzolo2019refusion}, and KITTI~\cite{geiger2012kitti} following prior protocols. Long-horizon behavior is evaluated on extended TUM~\cite{sturm2012tum} and Bonn~\cite{palazzolo2019refusion} sequences substantially longer than the training horizon. 3D reconstruction quality is measured on 7-Scenes~\cite{20137scenes} and NRGBD~\cite{nrgbd2022}. Full dataset and preprocessing details are provided in the appendix.

\paragraph{Baselines.}
We compare against CUT3R~\cite{wang2025cut3r} (uniform overwrite) and TTT3R~\cite{chen2026ttt3r} (heuristic confidence scaling). We additionally report streaming reconstruction baseline with an explicit spatial memory architecture (Point3R). Additional comparisons with TTT3R+Reset are provided in the appendix (Tab.~\ref{tab:pose_long_reset} and Tab.~\ref{tab:depth_long_reset}).

\paragraph{Hyperparameter selection protocol.}
FILT3R has a small set of scalar hyperparameters controlling process and measurement noise. To avoid per-benchmark tuning, we tune on a held-out development set (long-horizon TUM-200 for pose, and a short-horizon set for depth) and use the same hyperparameters across all the experiments.The list of hyperparameters is provided in the appendix

\paragraph{Origin-aligned ATE.}
Standard ATE applies a global alignment over the full trajectory, which can partially mask progressive drift.
We therefore also report $\text{ATE}_{\text{orig}}$, aligning only the first frame and evaluating the remaining trajectory without further alignment,
so drift accumulation is directly reflected in the error.

\subsection{Long-horizon 3D reconstruction}
\begin{table*}[t]
\centering
\caption{\textbf{3D reconstruction on 7-Scenes and NRGBD.}
Accuracy (Acc), Completeness (Comp), and Normal Consistency (NC);
Mean and Median (Med.) are reported.
Point3R runs out of memory (OOM) at length 1000.}
\label{tab:recon_main}

{\small (a) \textbf{7-Scenes} (lengths 300 / 500 / 1000)}
\vspace{1pt}

\resizebox{\textwidth}{!}{
\begin{tabular}{l | cccccc | cccccc | cccccc}
\toprule
\multirow{3}{*}{\textbf{Method}} & \multicolumn{6}{c|}{\textbf{Length 300}} & \multicolumn{6}{c|}{\textbf{Length 500}} & \multicolumn{6}{c}{\textbf{Length 1000}} \\
\cmidrule(lr){2-7} \cmidrule(lr){8-13} \cmidrule(lr){14-19}
 & \multicolumn{2}{c}{Acc$\downarrow$} & \multicolumn{2}{c}{Comp$\downarrow$} & \multicolumn{2}{c|}{NC$\uparrow$} & \multicolumn{2}{c}{Acc$\downarrow$} & \multicolumn{2}{c}{Comp$\downarrow$} & \multicolumn{2}{c|}{NC$\uparrow$} & \multicolumn{2}{c}{Acc$\downarrow$} & \multicolumn{2}{c}{Comp$\downarrow$} & \multicolumn{2}{c}{NC$\uparrow$} \\
\cmidrule(lr){2-3} \cmidrule(lr){4-5} \cmidrule(lr){6-7} \cmidrule(lr){8-9} \cmidrule(lr){10-11} \cmidrule(lr){12-13} \cmidrule(lr){14-15} \cmidrule(lr){16-17} \cmidrule(lr){18-19}
 & Mean & Med. & Mean & Med. & Mean & Med. & Mean & Med. & Mean & Med. & Mean & Med. & Mean & Med. & Mean & Med. & Mean & Med. \\
\midrule
CUT3R              & 0.134 & 0.090 & 0.076 & 0.041 & 0.544 & 0.565 & 0.183 & 0.133 & 0.093 & 0.036 & 0.530 & 0.544 & 0.233 & 0.170 & 0.101 & 0.017 & 0.511 & 0.515 \\
Point3R            & 0.045 & 0.025 & 0.032 & 0.013 & 0.564 & 0.598 & 0.057 & 0.026 & 0.025 & 0.008 & 0.556 & 0.584 & \multicolumn{6}{c}{\cellcolor{lightred}{\textbf{OOM}}} \\
TTT3R              & 0.040 & 0.025 & 0.024 & 0.005 & 0.566 & 0.602 & 0.066 & 0.038 & 0.032 & 0.006 & 0.551 & 0.576 & 0.145 & 0.092 & 0.051 & 0.010 & 0.525 & 0.536 \\
\oursrow
\oursname          & \textbf{0.020} & \textbf{0.009} & \textbf{0.022} & \textbf{0.004} & \textbf{0.568} & \textbf{0.603} & \textbf{0.024} & \textbf{0.011} & \textbf{0.023} & \textbf{0.004} & \textbf{0.559} & \textbf{0.589} & \textbf{0.054} & \textbf{0.028} & \textbf{0.028} & \textbf{0.005} & \textbf{0.535} & \textbf{0.550} \\
\bottomrule
\end{tabular}}

\vspace{4pt}
{\small (b) \textbf{NRGBD} (lengths 300 / 400 / 500)}
\vspace{1pt}

\resizebox{\textwidth}{!}{
\begin{tabular}{l | cccccc | cccccc | cccccc}
\toprule
\multirow{3}{*}{\textbf{Method}} & \multicolumn{6}{c|}{\textbf{Length 300}} & \multicolumn{6}{c|}{\textbf{Length 400}} & \multicolumn{6}{c}{\textbf{Length 500}} \\
\cmidrule(lr){2-7} \cmidrule(lr){8-13} \cmidrule(lr){14-19}
 & \multicolumn{2}{c}{Acc$\downarrow$} & \multicolumn{2}{c}{Comp$\downarrow$} & \multicolumn{2}{c|}{NC$\uparrow$} & \multicolumn{2}{c}{Acc$\downarrow$} & \multicolumn{2}{c}{Comp$\downarrow$} & \multicolumn{2}{c|}{NC$\uparrow$} & \multicolumn{2}{c}{Acc$\downarrow$} & \multicolumn{2}{c}{Comp$\downarrow$} & \multicolumn{2}{c}{NC$\uparrow$} \\
\cmidrule(lr){2-3} \cmidrule(lr){4-5} \cmidrule(lr){6-7} \cmidrule(lr){8-9} \cmidrule(lr){10-11} \cmidrule(lr){12-13} \cmidrule(lr){14-15} \cmidrule(lr){16-17} \cmidrule(lr){18-19}
 & Mean & Med. & Mean & Med. & Mean & Med. & Mean & Med. & Mean & Med. & Mean & Med. & Mean & Med. & Mean & Med. & Mean & Med. \\
\midrule
CUT3R              & 0.216 & 0.122 & 0.073 & 0.017 & 0.583 & 0.629 & 0.293 & 0.212 & 0.103 & 0.042 & 0.561 & 0.592 & 0.313 & 0.238 & 0.141 & 0.045 & 0.558 & 0.585 \\
Point3R      & \textbf{0.065} & 0.037 & \textbf{0.012} & \textbf{0.003} & 0.617 & 0.693 & \textbf{0.084} & 0.040 & \textbf{0.020} & \textbf{0.003} & 0.613 & 0.684 & 0.112 & 0.046 & \textbf{0.028} & \textbf{0.003} & 0.614 & 0.687 \\
TTT3R              & 0.102 & 0.044 & 0.024 & 0.004 & 0.613 & 0.684 & 0.144 & 0.064 & 0.072 & 0.011 & 0.598 & 0.656 & 0.170 & 0.095 & 0.090 & 0.019 & 0.590 & 0.642 \\
\oursrow
\oursname      & 0.079 & \textbf{0.027} & \textbf{0.012} & 0.004 & \textbf{0.628} & \textbf{0.710} & 0.086 & \textbf{0.036} & 0.026 & 0.005 & \textbf{0.623} & \textbf{0.701} & \textbf{0.096} & \textbf{0.042} & 0.035 & 0.005 & \textbf{0.621} & \textbf{0.697} \\
\bottomrule
\end{tabular}}
\end{table*}

\Cref{tab:recon_main} reports 3D reconstruction quality on 7-Scenes and NRGBD at long sequence lengths.
On 7-Scenes, FILT3R consistently outperforms CUT3R and TTT3R, with the gap widening as sequences lengthen:
mean accuracy improves from 0.040 to 0.020 at length 300 and from 0.066 to 0.024 at length 500
compared to TTT3R, while maintaining strong completeness and normal consistency.
At length 1000---well beyond the training horizon---Point3R runs out of memory,
while CUT3R and TTT3R degrade substantially; FILT3R continues to produce coherent geometry,
confirming that uncertainty propagation compounds in value over long rollouts.
On NRGBD, FILT3R improves over CUT3R and TTT3R at all lengths and remains competitive with prior streaming baselines at shorter horizons.

\subsection{Long-horizon camera pose estimation}

\begin{table*}[t]
\centering
\caption{\textbf{Long-horizon camera pose on TUM-RGBD.} We report standard ATE (with global Procrustes alignment) and origin-aligned ATE ($\text{ATE}_{\text{orig}}$), which aligns only the first frame and therefore exposes progressive drift. FILT3R provides clear long-horizon stability gains.}
\label{tab:pose_long}
\resizebox{\textwidth}{!}{
\begin{tabular}{l | cccc | cccc | cccc}
\toprule
\multirow{2}{*}{\textbf{Method}} & \multicolumn{4}{c|}{\textbf{TUM-400}} & \multicolumn{4}{c|}{\textbf{TUM-600}} & \multicolumn{4}{c}{\textbf{TUM-800}}\\
\cmidrule(lr){2-5}\cmidrule(lr){6-9}\cmidrule(lr){10-13}
 & ATE$\downarrow$ & $\text{ATE}_{\text{orig}}\!\downarrow$ & RPE-t$\downarrow$ & RPE-r$\downarrow$
 & ATE$\downarrow$ & $\text{ATE}_{\text{orig}}\!\downarrow$ & RPE-t$\downarrow$ & RPE-r$\downarrow$
 & ATE$\downarrow$ & $\text{ATE}_{\text{orig}}\!\downarrow$ & RPE-t$\downarrow$ & RPE-r$\downarrow$ \\
\midrule
CUT3R         & 0.109 & 0.302 & 0.010 & 0.381 & 0.145 & 0.425 & \textbf{0.008} & 0.430 & 0.173 & 0.493 & \textbf{0.008} & 0.486 \\
TTT3R         & 0.055 & 0.128 & 0.010 & 0.328 & 0.084 & 0.176 & 0.009 & 0.365 & 0.097 & 0.214 & 0.009 & 0.387 \\
\oursrow
\oursname  & \textbf{0.033} & \textbf{0.074} & \textbf{0.009} & \textbf{0.305} & \textbf{0.042} & \textbf{0.082} & 0.009 & \textbf{0.332} & \textbf{0.057} & \textbf{0.107} & 0.009 & \textbf{0.362} \\
\bottomrule
\end{tabular}}
\end{table*}

As shown in \Cref{tab:pose_long}, FILT3R substantially reduces long-horizon trajectory error on extended TUM-RGBD sequences.
The improvements are most pronounced under origin-aligned ATE, which directly exposes progressive drift:
at 800 frames, $\text{ATE}_{\text{orig}}$ drops from 0.214 (TTT3R) and 0.493 (CUT3R) to 0.107 with FILT3R.
Standard ATE also improves (0.097 $\rightarrow$ 0.057 at 800 frames).
Local relative errors are similar to TTT3R in translation (RPE-t $\approx$ 0.009), while rotation error decreases
(0.387 $\rightarrow$ 0.362 at 800 frames), suggesting FILT3R primarily mitigates slow drift and catastrophic forgetting rather
than short-term relative motion.

\subsection{Long-horizon video depth estimation}

\begin{table*}[htbp!]
\centering
\caption{\textbf{Long-horizon video depth on Bonn (metric scale).} Sequences are truncated to the indicated length. FILT3R maintains stable metric depth predictions as sequence length grows, whereas competing methods suffer from accumulating drift.}
\label{tab:depth_long}
\resizebox{\textwidth}{!}{
\begin{tabular}{l | ccc | ccc | ccc}
\toprule
\multirow{2}{*}{\textbf{Method}} &
\multicolumn{3}{c|}{\textbf{Bonn-300}} &
\multicolumn{3}{c|}{\textbf{Bonn-400}} &
\multicolumn{3}{c}{\textbf{Bonn-500}}\\
\cmidrule(lr){2-4}\cmidrule(lr){5-7}\cmidrule(lr){8-10}
 & Abs Rel$\downarrow$ & $\delta{<}1.25\uparrow$ & log RMSE$\downarrow$ & Abs Rel$\downarrow$ & $\delta{<}1.25\uparrow$ & log RMSE$\downarrow$ & Abs Rel$\downarrow$ & $\delta{<}1.25\uparrow$ & log RMSE$\downarrow$ \\
\midrule
CUT3R         & 0.107 & 88.7 & 0.162 & 0.107 & 89.6 & 0.161 & 0.101 & 90.6 & 0.156 \\
TTT3R         & 0.108 & 90.1 & 0.163 & 0.104 & 91.2 & 0.158 & 0.100 & 92.1 & 0.154 \\
\oursrow
\oursname & \textbf{0.093} & \textbf{93.8} & \textbf{0.152} & \textbf{0.090} & \textbf{94.1} & \textbf{0.149} & \textbf{0.089} & \textbf{94.4} & \textbf{0.148} \\
\bottomrule
\end{tabular}}
\end{table*}

\Cref{tab:depth_long} evaluates metric-scale depth on extended Bonn sequences (300--500 frames), beyond the training horizon.
FILT3R achieves the best performance at every truncation length.
Compared to TTT3R, FILT3R reduces Abs Rel from 0.108 to 0.093 at 300 frames and from 0.100 to 0.089 at 500 frames,
while improving $\delta{<}1.25$ from 90.1 to 93.8 and from 92.1 to 94.4.
The consistently lower log RMSE indicates improved robustness to rare but severe outlier predictions that can arise from overwriting a stable latent state with a noisy candidate update.
\Cref{fig:long_horizon_metrics} (bottom) further shows FILT3R maintains lower error across sequence prefixes, indicating reduced drift accumulation.

\begin{figure}[t]
\centering
\begin{minipage}[t]{0.46\linewidth}
\centering
\includegraphics[width=\linewidth]{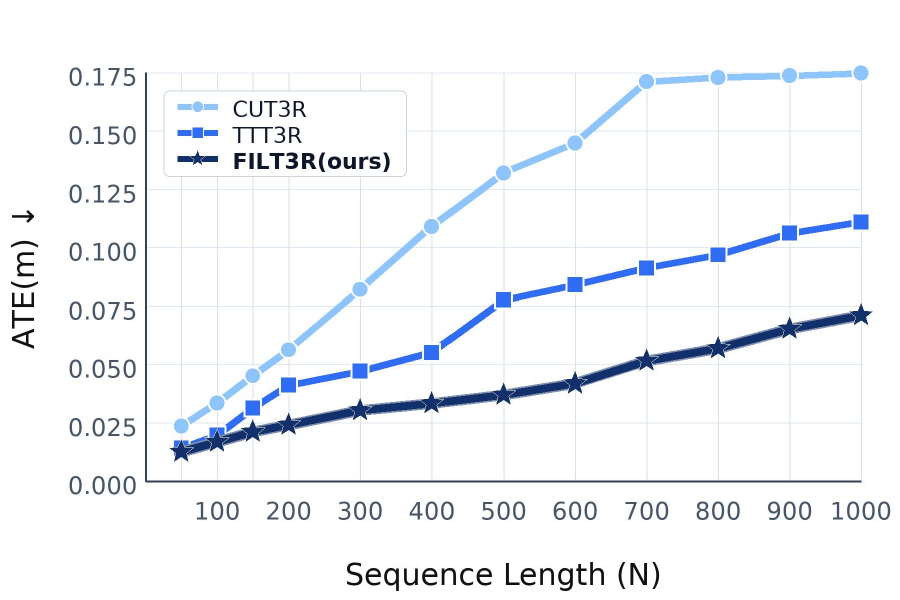}
\caption*{ (a) TUM-RGBD: ATE vs. length }
\end{minipage}
\hfill
\begin{minipage}[t]{0.46\linewidth}
\centering
\includegraphics[width=\linewidth]{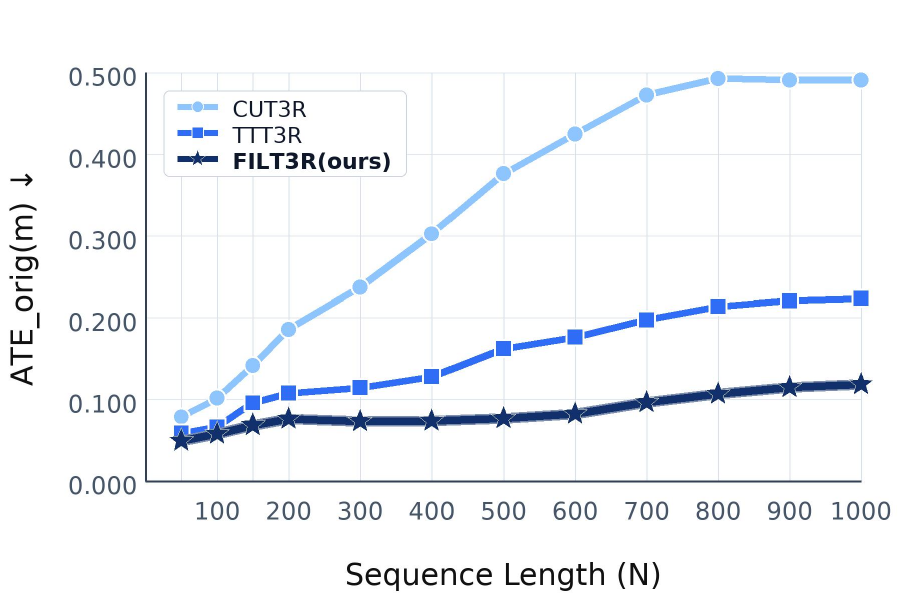}
\caption*{ (b) TUM-RGBD: $\text{ATE}_{\text{orig}}$ vs. length }
\end{minipage}

\begin{minipage}[t]{0.46\linewidth}
\centering
\includegraphics[width=\linewidth]{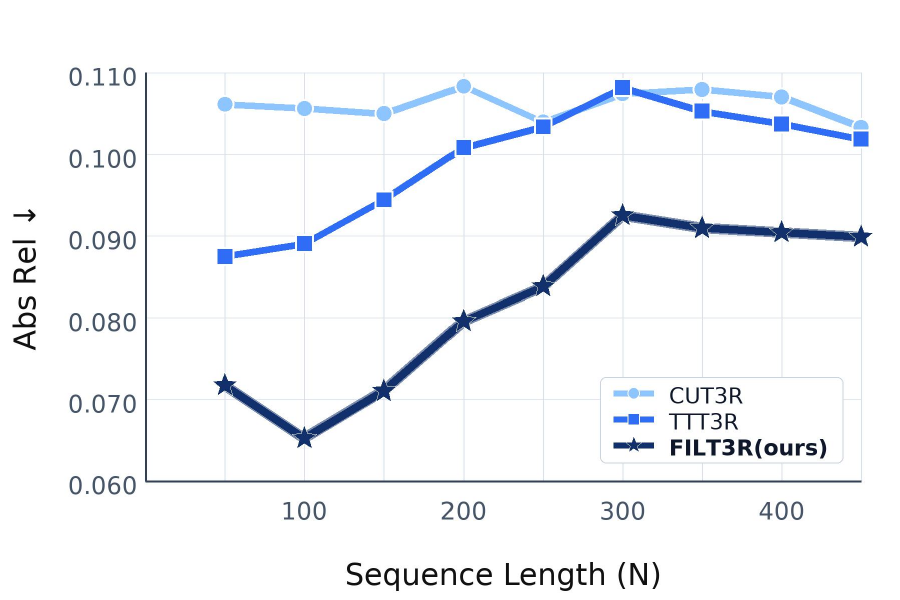}
\caption*{ (c) Bonn: Abs Rel vs. length }
\end{minipage}
\hfill
\begin{minipage}[t]{0.46\linewidth}
\centering
\includegraphics[width=\linewidth]{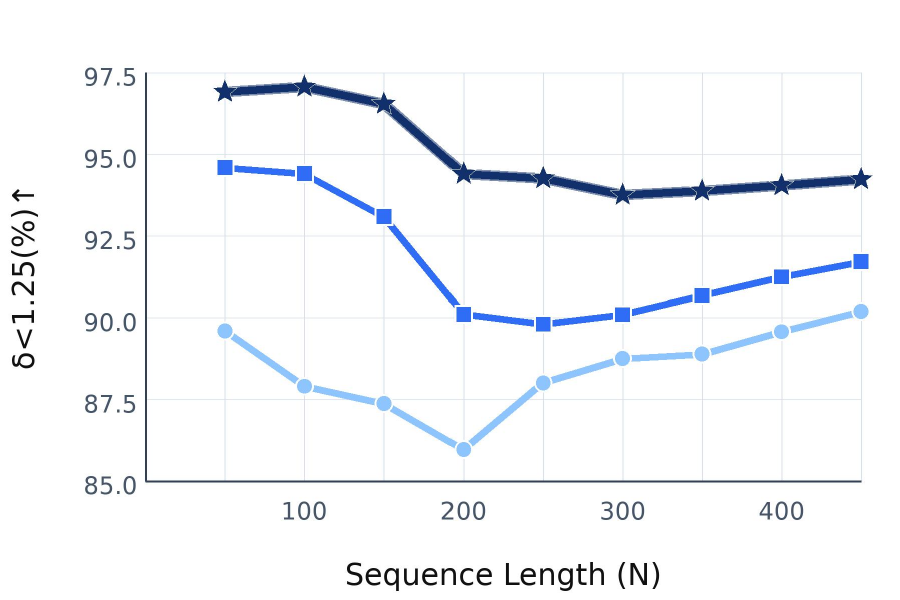}
\caption*{ (d) Bonn: $\delta{<}1.25$ vs. length }
\end{minipage}

\caption{\textbf{Long-horizon metrics vs. sequence length.} \textbf{Top (a, b):} Camera pose on TUM-RGBD (50--1000 frames). FILT3R's drift grows more slowly. \textbf{Bottom (c, d):} Metric scale video depth on Bonn (50--500 frames). FILT3R consistently mitigates drift and improves over the baselines as the sequence progresses.}
\label{fig:long_horizon_metrics}
\end{figure}

\subsection{Generality across memory architectures}
\label{sec:generality}
FILT3R is presented in a CUT3R-style setup, but the underlying idea---treating the recurrent state as a belief and the decoder candidate as a noisy measurement---is not specific to CUT3R's compact latent. We therefore apply it to two streaming models with \emph{different} memory designs, changing only their state-update rule and retraining nothing.
Applied to the fast-weight memory of tttLRM~\cite{wang2026tttlrm} (inserted into its LaCT/Muon transition, denoted FILT3R-FW), FILT3R improves PSNR over the unmodified autoregressive update by up to $+1.08$\,dB at length 128 and, when finer update steps induce catastrophic forgetting, rescues up to $+1.6$\,dB on all 140 DL3DV-140 scenes (\cref{fig:tttlrm_qual}). It also extends to Point3R's~\cite{wu2025point3r} explicit pointer memory and compares favorably to the attention/KV-cache memory InfiniteVGGT~\cite{infinitevggt}, so FILT3R generalizes across \emph{fast-weight}, \emph{explicit-pointer}, and \emph{compact-latent} memories. Full tables and analysis are in Appendix~\ref{app:tttlrm} (fast weights) and Appendix~\ref{app:memory_designs} (pointer memory).

\begin{figure}[t]
\centering
\includegraphics[width=0.85\linewidth]{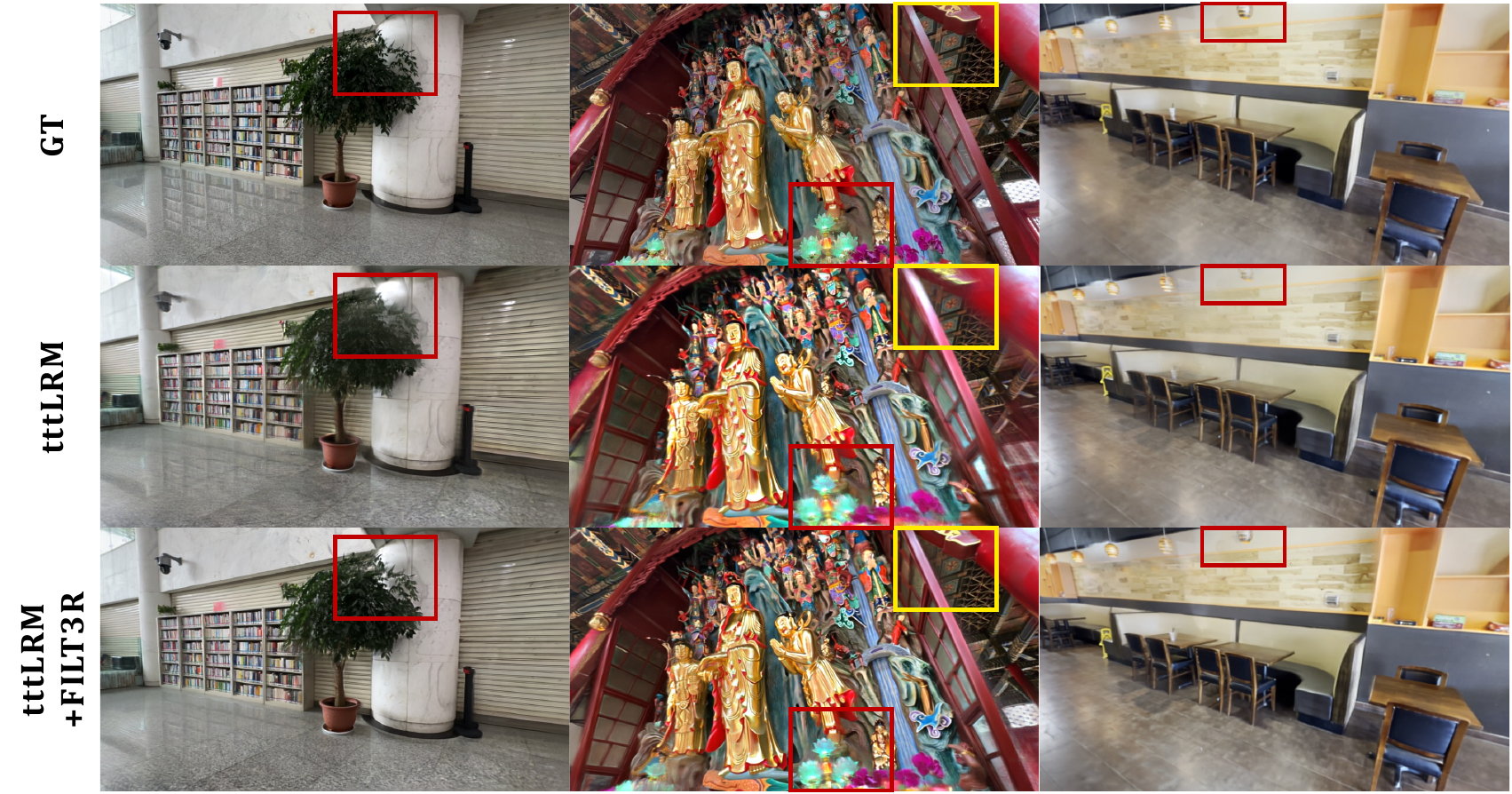}
\caption{\textbf{FILT3R on tttLRM fast-weight memory (DL3DV-140).} Novel-view renders from GT, tttLRM, and tttLRM+FILT3R (top to bottom). Inserting FILT3R into the fast-weight update sharpens fine structure and suppresses the blur and ghosting that the unfiltered model accumulates over long view sequences; highlighted regions mark representative cases.}
\label{fig:tttlrm_qual}
\end{figure}

\subsection{Short-horizon regression check}

Finally, we verify that FILT3R does not sacrifice short-horizon accuracy for long-horizon stability.
Comparing the three update rules under identical backbone weights, FILT3R matches or improves short-horizon accuracy while providing clear long-horizon stability gains.
Under metric-scale evaluation the gains are larger: Abs Rel drops from 1.029 (CUT3R) and 0.977 (TTT3R) to 0.772 on Sintel, and from 0.103 / 0.090 to 0.067 on Bonn.
Full comparisons including optimization-based and full-attention baselines are in Appendix~\ref{app:depth_short_full}.

\subsection{Efficiency}
\label{sec:efficiency}

\begin{figure}[t]
\centering
\begin{minipage}[t]{0.49\linewidth}\vspace{0pt}
    \captionof{table}{\textbf{Runtime and GPU memory (500 frames).}
    Benchmark on NRGBD (512$\times$384, warmup=2, 10 runs).
    \textbf{FILT3R (ours)} avoids attention-map caching, matching CUT3R's footprint.}
    \label{tab:efficiency}
    \begin{center}
    \resizebox{\linewidth}{!}{
    \small
    \begin{tabular}{l c c c}
    \toprule
    \textbf{Method} & \textbf{Attn} & \textbf{FPS}$\uparrow$ & \textbf{Mem}(MB)$\downarrow$ \\
    \midrule
    CUT3R                   & --         & 25.27{\scriptsize$\pm$.05} & 3503 \\
    TTT3R                   & \checkmark & 24.69{\scriptsize$\pm$.05} & 6294 \\
    FILT3R (adpt.\ $r_t$)  & \checkmark & 24.83{\scriptsize$\pm$.04} & 6294 \\
    \oursrow
    \oursname               & --         & 25.14{\scriptsize$\pm$.04} & 3503 \\
    \bottomrule
    \end{tabular}
    }
    \end{center}
\end{minipage}\hfill
\begin{minipage}[t]{0.49\linewidth}\vspace{0pt}
    \centering
    \includegraphics[width=0.85\linewidth]{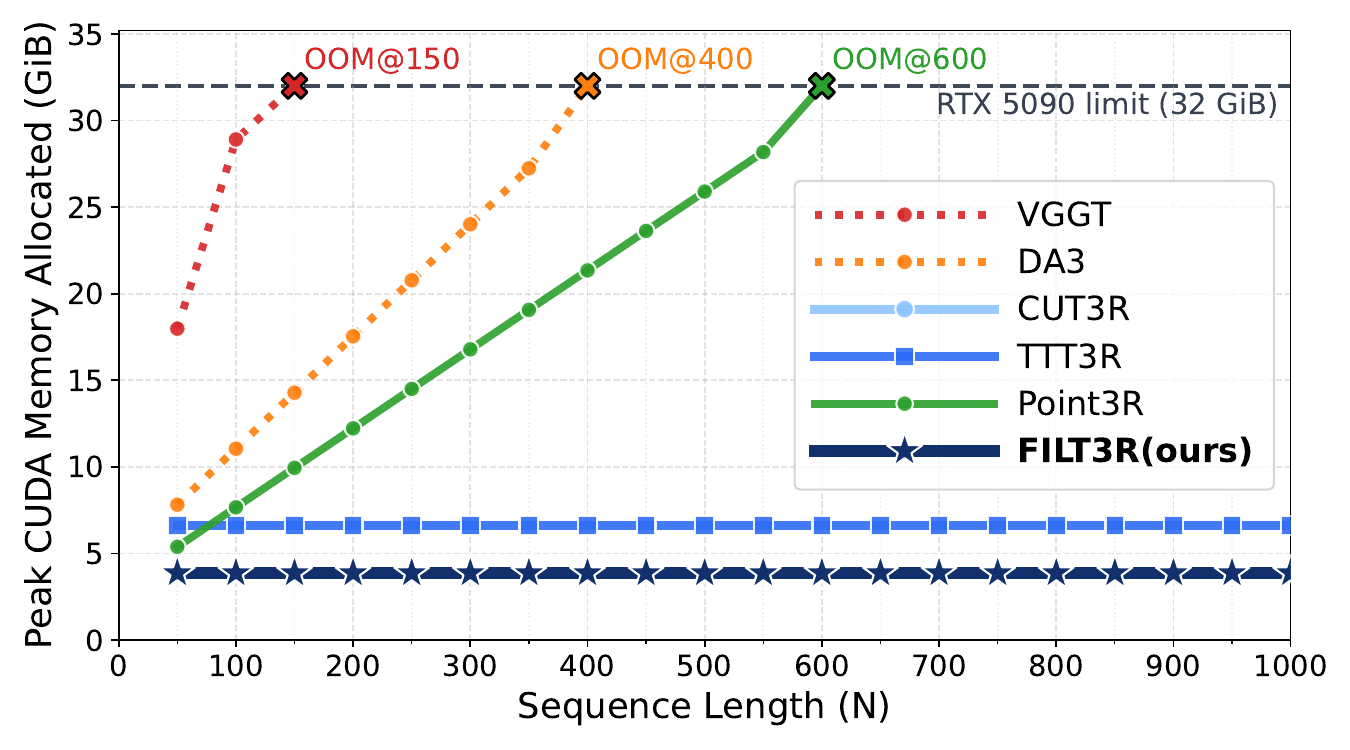}
    \caption{\textbf{Peak GPU mem. vs.\ frames.}
    Full-attention methods (VGGT, DA3) and Point3R grow with length;
    CUT3R/FILT3R stay constant. FILT3R overlaps CUT3R, confirming negligible overhead.}
    \label{fig:memory_vs_frames}
\end{minipage}
\end{figure}

\Cref{tab:efficiency} and \Cref{fig:memory_vs_frames} compare runtime and memory.
Full-attention architectures (VGGT, DA3) and Point3R scale linearly with sequence length, with Point3R running out of memory beyond ${\sim}$600 frames on a 32\,GB GPU.
FILT3R maintains effectively the same constant-memory footprint as CUT3R while TTT3R nearly doubles memory due to cached attention maps for attention-map-based gating.

\subsection{Analysis}
\label{sec:analysis}
To understand why FILT3R improves long-horizon stability, we analyze its induced update dynamics on long TUM-RGBD sequences. FILT3R behaves as an uncertainty-calibrated conservative updater: as confidence accumulates, propagated uncertainty contracts and the effective gain settles into a low-update regime, which reduces the applied update magnitude. This matches the error profile in \Cref{tab:pose_long}: FILT3R substantially lowers $\text{ATE}_{\text{orig}}$ while leaving short-horizon RPE nearly unchanged, indicating that its main benefit is not more aggressive correction at transition frames, but less persistent overwrite of the latent state and therefore slower cumulative drift. The adaptive process-noise term still matters because it temporarily relaxes this conservative regime on difficult frames, allowing the model to remain responsive without reverting to continual high-gain updates; consistent with this interpretation, removing uncertainty propagation in \Cref{tab:ablation} substantially degrades long-horizon performance. Detailed transition-level statistics are provided in the appendix (Fig.~\ref{fig:transition_timeline}).

\subsection{Ablations}
\label{sec:ablations}

\begin{table}[t]
\centering
\caption{\textbf{Ablation study of FILT3R components.}
All variants use identical frozen backbone weights and identical inference code;
only the online update policy is changed. We report long-horizon pose on TUM-800
and long-horizon metric depth on Bonn-500. Lower is better ($\downarrow$) except
$\delta{<}1.25$ ($\uparrow$).}
\label{tab:ablation}
\small
\setlength{\tabcolsep}{4pt}
\resizebox{\textwidth}{!}{
\begin{tabular}{l|cccc|ccc}
\toprule
\multirow{2}{*}{\textbf{Variant}} &
\multicolumn{4}{c|}{\textbf{TUM-800 (pose)}} &
\multicolumn{3}{c}{\textbf{Bonn-500 (metric depth)}}\\
\cmidrule(lr){2-5}\cmidrule(lr){6-8}
& ATE$\downarrow$ & $\text{ATE}_{\text{orig}}\downarrow$ & RPE-t$\downarrow$ & RPE-r$\downarrow$
& AbsRel$\downarrow$ & $\delta{<}1.25\uparrow$ & logRMSE$\downarrow$ \\
\midrule
FILT3R (full)                            & 0.057 & 0.107 & 0.009 & 0.362 & 0.089 & 94.4 & 0.148 \\
Fixed-$\mathbf{Q}$ (fixed $r$)           & 0.100 & 0.229 & 0.009 & 0.393 & 0.099 & 91.9 & 0.155 \\
No variance propagation (reset-$\mathbf{P}$) & 0.149 & 0.489 & 0.009 & 0.446 & 0.099 & 91.1 & 0.154 \\
Fixed-$\beta$ EMA ($\beta{=}0.05$, tuned) & 0.049 & 0.078 & 0.010 & 0.658 & 0.093 & 93.5 & 0.152 \\
No EMA normalization for drift           & 0.058 & 0.109 & 0.010 & 0.367 & 0.091 & 94.1 & 0.150 \\
Adaptive measurement noise ($r_t$)       & 0.074 & 0.148 & 0.009 & 0.372 & 0.093 & 93.9 & 0.151 \\
\bottomrule
\end{tabular}
}
\end{table}
All variants in Table~\ref{tab:ablation} share identical frozen backbone weights and identical inference code;
we only change the \emph{online state update}.
FILT3R’s update is fully determined by the gain $\boldsymbol{\beta}_t=\mathbf{k}_t$ in Eq.~\eqref{eq:state_update_main},
which in turn depends on (i) the process noise $\mathbf{q}_t$ (Eq.~\eqref{eq:q_gate_main}),
(ii) the measurement noise $r$ (Eq.~\eqref{eq:kalman_gain}), and
(iii) whether uncertainty is \emph{propagated} through the variance recursion (Eqs.~\eqref{eq:p_pred}--\eqref{eq:p_update}).
The ablations isolate each of these design choices.

\paragraph{(i) Adaptive process noise is necessary for targeted adaptation.}
\textbf{Fixed-$\mathbf{Q}$} replaces the drift-driven $\mathbf{q}_t$ with a constant.
This removes the ability to temporarily increase the update strength during genuine motion bursts or scene change.
As a result, long-horizon drift grows substantially (\eg, $\text{ATE}_{\text{orig}}$ increases from 0.107 to 0.229),
and depth accuracy also degrades (AbsRel 0.089 $\rightarrow$ 0.099).
This indicates that adaptivity should enter through the \emph{process model} rather than a fixed gate.

\paragraph{(ii) Variance propagation enables confidence accumulation.}
\textbf{No variance propagation (reset-$\mathbf{P}$)} keeps the same gain formula but discards the recursive uncertainty update.
Without accumulated confidence, the gain cannot systematically shrink in stable intervals,
so the model keeps injecting per-frame measurement noise into the state, leading to severe long-horizon drift
($\text{ATE}_{\text{orig}}$ 0.107 $\rightarrow$ 0.489) and worse depth (AbsRel 0.089 $\rightarrow$ 0.099).
This confirms that the Kalman-style \emph{propagation} (not just the functional form of the gain) is important.

\paragraph{(iii) A tuned fixed-gain EMA can reduce ATE, but it harms local motion and depth.}
\textbf{Fixed-$\beta$ EMA} uses a constant update
$\mathbf{s}_t=(1-\beta)\mathbf{s}_{t-1}+\beta\,\tilde{\mathbf{s}}_t$ with $\beta=0.05$ tuned on a development set.
Interestingly, this can yield \emph{lower} trajectory ATE on TUM-800 (0.049 vs.\ 0.057) and also improve
$\text{ATE}_{\text{orig}}$ (0.078 vs.\ 0.107), suggesting that a conservative fixed update can suppress some global drift.
However, it does so by uniformly damping updates, which noticeably degrades \emph{local} relative motion:
rotation error increases sharply (RPE-r 0.362 $\rightarrow$ 0.658).
At the same time, depth quality is still worse than FILT3R.
Overall, fixed-$\beta$ behaves like a \emph{global low-pass filter}: it can reduce drift under some metrics,
but it cannot simultaneously preserve accurate short-term motion and maintain depth quality across long rollouts.
This highlights the need for \emph{uncertainty-aware} and \emph{transition-aware} gains rather than a single fixed forgetting rate.

\paragraph{(iv) EMA normalization is a stabilizer but not the dominant factor here.}
\textbf{No EMA normalization} removes the running baseline $\hat{\Delta}_t$ in Eq.~\eqref{eq:ema_drift}.
On these benchmarks, drift magnitudes are already relatively well-scaled, so the effect is modest
(\eg, $\text{ATE}_{\text{orig}}$ 0.107 $\rightarrow$ 0.109; AbsRel 0.089 $\rightarrow$ 0.091).
We keep EMA normalization as a robust default because it mitigates early/late scale shifts and reduces gain jitter
when drift statistics vary across scenes and sequences.

\paragraph{(v) Adaptive measurement noise hurts long-horizon stability and adds overhead.}
\textbf{Adaptive $r_t$} replaces the fixed scalar $r$ with a per-token measurement noise estimated from attention statistics,
following the intuition that tokens attending strongly to image evidence should be trusted more (lower $r_{t,i}$).
However, this variant degrades long-horizon performance compared to the full model:
$\text{ATE}_{\text{orig}}$ increases from 0.107 to 0.148 and ATE from 0.057 to 0.074,
while depth also worsens (AbsRel 0.089 $\rightarrow$ 0.093, $\delta{<}1.25$ 94.4 $\rightarrow$ 93.9).
We believe the reason is that attention-aligned novelty is correlated with temporal drift:
during scene transitions, both $\mathbf{q}_t$ increases (from large drift) and $r_{t,i}$ decreases (from high attention to novel content),
causing the gain to spike more aggressively than either signal alone would warrant.
This co-excitation amplifies updates precisely when stability matters most, undermining the filter's conservative accumulation of confidence.
Since adaptive $r_t$ additionally requires caching attention maps---nearly doubling GPU memory to match attention-based gating baselines (\cref{tab:efficiency})---we
favor the simpler and more stable design: a fixed scalar $r$ as a measurement-noise anchor,
with all adaptivity concentrated in the process noise $\mathbf{q}_t$.

\section{Limitations and discussion}
\label{sec:limitations}
FILT3R fixes $r$ to a scalar and concentrates adaptivity in $\mathbf{q}_t$, which suits a frozen decoder but ignores genuine measurement-quality variation (blur, occlusion, motion into unseen regions); our ablation shows that estimating $r_t$ from the same novelty signal that drives $\mathbf{q}_t$ \emph{hurts} stability (\cref{sec:ablations}), so reliable $r_t$ adaptation likely needs an \emph{external} cue, and because large drift only \emph{raises} the gain FILT3R can over-trust the candidate when the backbone is least reliable.
Our linear-Gaussian view is an inductive bias rather than an exact model of the nonlinear decoder---hence Kalman-\emph{style}---and it assumes a streaming model exposing a candidate and a previous state; extending it to attention/KV-cache memories without an explicit recurrent state remains open.

\section{Conclusion}
We introduced FILT3R, a principled, training-free solution to the long-horizon drift and catastrophic forgetting that typically degrade streaming 3D reconstruction. By casting recurrent token updates as an adaptive Kalman-style filtering problem, FILT3R explicitly propagates per-token uncertainty to dynamically balance historical memory retention with fast adaptation. Crucially, this uncertainty-aware state fusion acts as a plug-and-play stabilizer. Relying solely on internal temporal drift to estimate process noise—anchored by a single, fixed measurement noise scalar—it unifies and outperforms prior heuristic update rules without requiring any retraining. Our extensive evaluations across video depth, camera pose, and 3D reconstruction demonstrate that FILT3R significantly extends the effective memory horizon, paving the way for highly stable, continuous, and long-form online scene understanding.

\section*{Acknowledgements}
\begingroup\small
This work was supported by the National Research Foundation of Korea under Grant RS-2024-00336454, and by the Institute for Information \& Communications Technology Planning \& Evaluation (IITP) grant funded by the Korea government (MSIT) (RS-2019-II190075, Artificial Intelligence Graduate School Program (KAIST)).
This research was also supported by the Advanced GPU Utilization Support Program funded by the Government of the Republic of Korea (Ministry of Science and ICT) (02-26-01-0404), and by the AI Computing Infrastructure Enhancement (GPU Rental Support) User Support Program funded by the Ministry of Science and ICT (MSIT), Republic of Korea (RQT-25-120217).

\par\endgroup

\clearpage
\bibliographystyle{splncs04}
\bibliography{main}

\clearpage
\appendix
\renewcommand\thesection{\Alph{section}}
\renewcommand\theHsection{appendix.\Alph{section}}

\section*{Appendix Index}
This appendix is organized as follows.

\appendixindexentry{sec:theory}{A. Method, theoretical analysis, and comparison with fixed-EMA smoothing}
\appendixsubindexentry{app:memory_horizon}{A.1 Growth of the effective memory horizon}
\appendixsubindexentry{sec:gain_floor}{A.2 Implemented gain floor}
\appendixsubindexentry{sec:not_ema}{A.3 Distinguishing FILT3R from fixed-EMA smoothing}
\appendixindexentry{app:reset_analysis}{B. Why periodic hard resets are not sufficient}
\appendixindexentry{sec:appendix_protocol}{C. Evaluation protocol and reproducibility}
\appendixindexentry{app:kitti_long}{D. Additional long-horizon depth results on KITTI}
\appendixindexentry{app:depth_short_full}{E. Complete short-horizon tables}
\appendixindexentry{app:hyperparams}{F. Hyperparameter details}
\appendixindexentry{app:qualitative_long}{G. Additional qualitative evidence of long-horizon stability}
\appendixindexentry{app:tttlrm}{H. FILT3R on fast-weight memory (tttLRM): integration and detailed results}
\appendixsubindexentry{app:tttlrm_integration}{H.1 Integration into the LaCT/Muon fast-weight update}
\appendixsubindexentry{tab:tttlrm_len}{H.2 Length scaling at the standard update step}
\appendixsubindexentry{tab:tttlrm_chunks}{H.3 Chunk-axis stress test: catastrophic forgetting and rescue}
\appendixsubindexentry{tab:tttlrm_gain}{H.4 Induced gain dynamics}
\appendixsubindexentry{tab:tttlrm_hparams}{H.5 Hyperparameters}
\appendixindexentry{app:memory_designs}{I. Generality to pointer memory (Point3R)}

\clearpage

\section{Method, theoretical analysis, and comparison with fixed-EMA smoothing}
\label{sec:theory}

\paragraph{Update rule recap.}
FILT3R modifies only the recurrent state-update rule, while backbone weights, decoder weights, and all other aspects of recurrent inference remain unchanged.
Specifically,
let $\tilde{s}_{t,i}$ denote the decoder's candidate token for the $i$-th token at frame $t$, let $p_{t,i}$ denote the propagated per-token variance, and let $r$ be the fixed scalar measurement noise shared across all tokens and experiments.
FILT3R computes a normalized drift score from consecutive candidate states and maps it to per-token process noise via a sigmoid gate:
\begin{equation}
\Delta_{t,i} = \lVert \tilde{s}_{t,i} - \tilde{s}_{t-1,i} \rVert_2,
\qquad
\hat{\Delta}_t = (1-\lambda_\Delta)\hat{\Delta}_{t-1} + \lambda_\Delta \bar{\Delta}_t,
\qquad
\bar{\Delta}_t = \frac{1}{N}\sum_i \Delta_{t,i},
\end{equation}
\begin{equation}
g_{t,i} = \Delta_{t,i}/\hat{\Delta}_t,
\qquad
q_{t,i} = q_{\min} + (q_{\max} - q_{\min}) \sigma\!\big(\alpha_q (g_{t,i} - \tau_q)\big).
\label{eq:q_gate_main_appendix}
\end{equation}
where $\sigma(\cdot)$ denotes the sigmoid.
The resulting process noise $q_{t,i}$ enters the scalar Kalman-style update:
\begin{equation}
p^-_{t,i} = p_{t-1,i} + q_{t,i},
\qquad
k_{t,i} = \frac{p^-_{t,i}}{p^-_{t,i} + r},
\end{equation}
\begin{equation}
s_{t,i} = (1-k_{t,i}) s_{t-1,i} + k_{t,i} \tilde{s}_{t,i},
\qquad
p_{t,i} = (1-k_{t,i})^2 p^-_{t,i} + k_{t,i}^2 r .
\label{eq:joseph}
\end{equation}
The complete per-frame procedure is summarized in Algorithm~\ref{alg:filt3r_update}.

The chain of effects is as follows:
large drift raises $q_{t,i}$, larger $q_{t,i}$ raises the predicted variance $p^-_{t,i}$, larger $p^-_{t,i}$ raises the gain $k_{t,i}$, and a larger gain makes the state move more aggressively toward the new candidate.
Conversely, if the stream remains stable for many frames, then $q_{t,i}$ stays small and the Joseph update in Eq.~\eqref{eq:joseph} contracts the posterior variance $p_{t,i}$, which reduces future gains and makes the latent state increasingly conservative. This is beneficial because the decoder's candidate $\tilde{s}_t$ is a noisy single-frame estimate: in stable regions, the accumulated state is a more reliable scene representation, and conservative updates preserve this advantage.

This is the key distinction from TTT3R.
TTT3R is also adaptive, but its gate is computed directly from current-frame cues (e.g., attention/change statistics) and does not maintain a propagated covariance state.
FILT3R separates these two roles:
$q_{t,i}$ provides frame-dependent adaptivity, while $p_{t,i}$ carries forward accumulated confidence.
As a result, FILT3R's gain depends not only on what is happening in the current frame, but also on how much consistent evidence has already been accumulated across the preceding history.

For numerical stability,
a small $\varepsilon$ is added to the denominators of the gain and normalized drift computations, the gain is clamped to $[k_{\min},k_{\max}]$, and the EMA drift baseline is clamped to a floor $\Delta_{\mathrm{floor}}$.
The additional state beyond the baseline consists of one $N$-dimensional variance vector, one scalar EMA baseline per stream, and one buffer storing the previous candidate state.

\begin{figure}[!ht]
\centering
\begin{minipage}[t]{0.48\linewidth}
\centering
\includegraphics[width=\linewidth]{figures/main/tum_pose_ate_long_horizon.pdf}
\caption*{(a) TUM-RGBD: ATE vs. length}
\end{minipage}\hfill
\begin{minipage}[t]{0.48\linewidth}
\centering
\includegraphics[width=\linewidth]{figures/main/tum_pose_ate_original_scale_long_horizon.pdf}
\caption*{(b) TUM-RGBD: $\text{ATE}_{\text{orig}}$ vs. length}
\end{minipage}

\begin{minipage}[t]{0.48\linewidth}
\centering
\includegraphics[width=\linewidth]{figures/main/bonn_depth_abs_rel_long_horizon.pdf}
\caption*{(c) Bonn: Abs Rel vs. length}
\end{minipage}\hfill
\begin{minipage}[t]{0.48\linewidth}
\centering
\includegraphics[width=\linewidth]{figures/main/bonn_depth_delta_1_25_long_horizon.pdf}
\caption*{(d) Bonn: $\delta{<}1.25$ vs. length}
\end{minipage}
\caption{\textbf{FILT3R slows long-horizon error growth.}
Across both camera pose and metric-scale depth, FILT3R's errors grow more slowly than those of overwrite (CUT3R) and heuristic gating (TTT3R) as the evaluated prefix length increases.
This is the central empirical pattern that the remainder of the appendix explains.}
\label{fig:long_horizon_metrics_appendix}
\end{figure}

\paragraph{Notation and first-frame handling.}
Notation follows the main paper.
In Sec.~\ref{app:memory_horizon}, we write $\beta_{t,i}:=k_{t,i}$ only to make the smoothing interpretation explicit; all other symbols are unchanged.
On the first valid frame, the latent state is initialized by direct overwrite with the current candidate, the covariance is set to $p_0$ for every token, and the previous-candidate buffer is set to the current candidate state.
The drift EMA is not preseeded separately: when the first temporal difference becomes available, its running value is initialized from that mean drift and then clamped to $\Delta_{\mathrm{floor}}$.

\subsection{Growth of the effective memory horizon}
\label{app:memory_horizon}

The Kalman-style gain $k_{t,i}$ acts as a per-token update coefficient $\beta_{t,i} := k_{t,i} \in [0,1]$, yielding a time-varying exponential smoothing:
\begin{equation}
s_{t,i} = (1-\beta_{t,i}) s_{t-1,i} + \beta_{t,i} \tilde{s}_{t,i}.
\label{eq:ema_form}
\end{equation}
Unrolling this recursion gives a weighted history of past candidates:
\begin{equation}
s_{t,i}
=
\Big(\prod_{u=1}^{t}(1-\beta_{u,i})\Big)s_{0,i}
+
\sum_{\tau=1}^{t}
\Big(
\beta_{\tau,i}\prod_{u=\tau+1}^{t}(1-\beta_{u,i})
\Big)\tilde{s}_{\tau,i}.
\label{eq:unroll_weights}
\end{equation}
A constant gain $\beta_{t,i}\equiv\beta$ gives a fixed exponential memory horizon. If $\beta_{t,i}$ decreases over time, older candidates decay more slowly and the effective memory horizon expands.

\begin{algorithm}[!tb]
\caption{\textbf{FILT3R replaces only the latent-state update.}
The recurrent architecture, decoder, and backbone checkpoint are otherwise identical to the baseline.}
\label{alg:filt3r_update}
\begin{algorithmic}[1]
    \Require Previous latent state $\mathbf{s}_{t-1}$, covariance $\mathbf{p}_{t-1}$, candidate state $\tilde{\mathbf{s}}_{t-1}$, and measurement noise $r$
    \State Decode the current frame with the frozen streaming backbone to obtain candidate state $\tilde{\mathbf{s}}_t$.
    \State Compute temporal drift from $\tilde{\mathbf{s}}_t$ and $\tilde{\mathbf{s}}_{t-1}$; update the EMA drift baseline.
    \State Estimate per-token process noise $\mathbf{q}_t$ with Eq.~\eqref{eq:q_gate_main}.
    \State Predict covariance: $\mathbf{p}_t^- = \mathbf{p}_{t-1} + \mathbf{q}_t$.
    \State Compute Kalman-style gain: $\mathbf{k}_t = \mathbf{p}_t^- / (\mathbf{p}_t^- + r)$.
    \State Update latent state: $\mathbf{s}_t = (1-\mathbf{k}_t)\odot\mathbf{s}_{t-1} + \mathbf{k}_t \odot \tilde{\mathbf{s}}_t$.
    \State Update covariance with the scalar Joseph recursion (Eq.~\ref{eq:joseph}).
    \State Store $\tilde{\mathbf{s}}_t$ and $\mathbf{p}_t$ for the next frame.
\end{algorithmic}
\end{algorithm}

We can distinguish TTT3R and FILT3R more clearly.
Both methods are adaptive in the broad sense that their update coefficients can change with time.
The difference is \emph{how} that change is generated.
TTT3R changes its gate from current cues, but it does not propagate a covariance state whose contraction makes the gate depend on accumulated confidence.
FILT3R does.
Its gain at time $t$ depends both on the current process-noise estimate $q_{t,i}$ and on the recursively propagated uncertainty $p_{t-1,i}$.
Therefore the same amount of instantaneous drift can lead to a smaller update after a long stable stretch than it would immediately after a transition.

The following proposition isolates this mechanism in the cleanest possible setting.
It is intentionally idealized: the scene is static, $q=0$, and no implementation clamps are applied.

\begin{proposition}[Static scenes yield naturally shrinking gains]
\label{prop:static_gain}
Consider a scalar token in a static regime: the true latent is constant $s^\star$, measurements are $\tilde{s}_t = s^\star + v_t$ with $v_t \sim \mathcal{N}(0,r)$, and process noise is zero ($q=0$).
Then the Kalman recursion gives
\begin{equation}
p_t = \frac{1}{\frac{1}{p_0} + \frac{t}{r}} = \mathcal{O}\!\left(\frac{1}{t}\right),
\qquad
k_t = \frac{p_{t-1}}{p_{t-1}+r} = \frac{1}{t + r/p_0}.
\label{eq:static_gain_decay}
\end{equation}
\end{proposition}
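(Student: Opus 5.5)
The plan is to work in information (inverse-variance) form, where the scalar recursion becomes linear. With $q=0$ the prediction step \eqref{eq:p_pred} gives $p^-_t = p_{t-1}$. The gain \eqref{eq:kalman_gain} is then $k_t = p_{t-1}/(p_{t-1}+r)$, which is the second expression in \eqref{eq:static_gain_decay}.

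\textbf{Step 1: simplify the Joseph update.} Substitute this gain into the Joseph update \eqref{eq:joseph}. Compute $1-k_t = r/(p_{t-1}+r)$. Then
\[
p_t=\frac{r^2 p_{t-1}}{(p_{t-1}+r)^2}+\frac{r\,p_{t-1}^2}{(p_{t-1}+r)^2}
=\frac{r\,p_{t-1}(r+p_{t-1})}{(p_{t-1}+r)^2}=\frac{r\,p_{t-1}}{p_{t-1}+r}.
\]
This is where the work lies: I must check that the Joseph form coincides with the standard form $(1-k_t)p^-_t$ when the gain is the optimal one. That holds here, but it fails for clamped gains, which is why the proposition excludes clamps.

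\textbf{Step 2: invert and unroll.} Taking reciprocals gives $1/p_t = 1/p_{t-1} + 1/r$. Induction on $t$ from $p_0$ then gives $1/p_t = 1/p_0 + t/r$, which is the first formula. It is $\mathcal{O}(1/t)$ because $p_t \le r/t$.

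\textbf{Step 3: closed form for the gain.} Write $k_t = 1/(1 + r/p_{t-1})$ and substitute $r/p_{t-1} = r/p_0 + (t-1)$. This gives $k_t = 1/(t + r/p_0)$, matching the statement.

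The only care needed is the indexing convention, namely that $k_t$ uses $p_{t-1}$. With $q=0$ this is consistent with $p^-_t = p_{t-1}$.

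The main obstacle is therefore only the algebra of Step 1. Everything after it is a telescoping sum.

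Finally, I would remark that the measurement model $\tilde s_t = s^\star + v_t$ is not needed for the recursion itself. Its role is to justify reading $p_t$ as the true posterior variance. Concretely, $s_t$ is then the precision-weighted average of $s_0$ and $\tilde s_1,\dots,\tilde s_t$, as the unrolling \eqref{eq:unroll_weights} with these gains confirms.
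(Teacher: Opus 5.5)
Your proposal is correct and follows the paper's proof essentially step for step. You reduce the Joseph update to $(1-k_t)p^-_t = r\,p_{t-1}/(p_{t-1}+r)$, invert to get the additive precision recursion $1/p_t = 1/p_{t-1} + 1/r$, telescope, and substitute into the gain. Your explicit bound $p_t \le r/t$, and your remark that these gains make $s_t$ the precision-weighted average of $s_0$ and $\tilde s_1,\dots,\tilde s_t$, are correct small additions that the paper leaves implicit.
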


\begin{proof}
With $q=0$, the prediction step is $p_t^- = p_{t-1}$ and
$k_t = p_t^- / (p_t^- + r)$.
The Joseph update (Eq.~\ref{eq:joseph}) gives
\begin{align}
p_t
&= (1-k_t)^2 \,p_t^- + k_t^2\, r
= \frac{r^2}{(p_t^- + r)^2}\, p_t^-
  + \frac{(p_t^-)^2}{(p_t^- + r)^2}\, r \notag\\
&= \frac{p_t^-\, r\,(r + p_t^-)}{(p_t^- + r)^2}
= \frac{p_t^-\, r}{p_t^- + r}
= (1-k_t)\,p_t^-.
\label{eq:joseph_simplifies}
\end{align}
Therefore $1/p_t = 1/p_{t-1} + 1/r$, which telescopes to
$1/p_t = 1/p_0 + t/r$.
Substituting into $k_t = p_{t-1}/(p_{t-1}+r)$ yields Eq.~\eqref{eq:static_gain_decay}.
\end{proof}

Proposition~\ref{prop:static_gain} shows that, under repeated consistent observations, precision accumulates linearly, the posterior variance shrinks as $\mathcal{O}(1/t)$, and the gain shrinks at the same order.
The filter therefore behaves like an average over an ever longer history rather than an update with a fixed forgetting rate.
Overwrite (CUT3R, $k_t\equiv 1$) lacks this mechanism, and an instantaneous gate such as TTT3R may vary from frame to frame but has no propagated covariance recursion that forces this specific gain decay as confidence accumulates.

The next proposition considers a different regime, constant $q>0$.
Unlike Proposition~\ref{prop:static_gain}, which isolates the idealized zero-process-noise mechanism, this result characterizes the finite steady-state operating point of the same scalar recursion.
In Sec.~\ref{sec:gain_floor} we use it to interpret the implemented gain floor when $q_{t,i}$ settles near $q_{\min}$.

\begin{proposition}[Steady-state gain vs.\ $q/r$]
\label{prop:steady_state}
For the scalar random-walk model with constant process noise $q>0$ and measurement noise $r>0$,
the posterior variance converges to a unique positive steady state
\begin{equation}
p^\star = \frac{\sqrt{q^2+4qr}-q}{2},
\label{eq:pstar}
\end{equation}
and the corresponding steady-state Kalman-style gain is
\begin{equation}
k^\star = \frac{p^\star+q}{p^\star+q+r}
= \frac{\sqrt{q^2+4qr}+q}{\sqrt{q^2+4qr}+q+2r}.
\label{eq:steady_gain}
\end{equation}
\end{proposition}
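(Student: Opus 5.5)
The plan is to reduce the coupled recursion (prediction, gain, Joseph update) to a one-dimensional map on the posterior variance and study its fixed points. As in the proof of Proposition~\ref{prop:static_gain}, the Joseph form with the optimal gain $k_t = p_t^-/(p_t^-+r)$ collapses to $p_t = (1-k_t)p_t^- = p_t^- r/(p_t^-+r)$; that algebra did not use $q=0$. With $p_t^- = p_{t-1}+q$, this gives the Riccati map
\[
p_t = f(p_{t-1}), \qquad f(p) := \frac{r(p+q)}{p+q+r}.
\]

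\textbf{Fixed point.} Setting $p = f(p)$ gives $p(p+q+r) = r(p+q)$, i.e.\ $p^2 + qp - qr = 0$. For $q,r>0$ the product of the roots is $-qr<0$, so exactly one root is positive, namely $p^\star = (\sqrt{q^2+4qr}-q)/2$. This is Eq.~\eqref{eq:pstar}.

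\textbf{Gain.} Substituting $p^\star$ into $k^\star = (p^\star+q)/(p^\star+q+r)$ and clearing the factor $1/2$ gives $p^\star+q = (\sqrt{q^2+4qr}+q)/2$. Hence
\[
k^\star = \frac{\sqrt{q^2+4qr}+q}{\sqrt{q^2+4qr}+q+2r},
\]
which is Eq.~\eqref{eq:steady_gain}.

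\textbf{Convergence.} This is the step needing real argument, since the statement claims $p_t \to p^\star$. On $[0,\infty)$ the map $f$ is increasing, with $f'(p) = r^2/(p+q+r)^2 \in (0,1)$, and it is bounded by $r$. Moreover $|f'(p)| \le r^2/(q+r)^2 < 1$ uniformly, so $f$ is a contraction on $[0,\infty)$ with constant $\rho = (r/(q+r))^2$. By the mean value theorem, $|p_t - p^\star| \le \rho^t |p_0 - p^\star|$. This yields geometric convergence from any $p_0 \ge 0$ and uniqueness of the nonnegative fixed point, consistent with the quadratic computation. Alternatively, one can argue by monotonicity: if $p_{t-1} < p^\star$ then $p_{t-1} < p_t < p^\star$, since $f(p) - p$ has the sign of $-(p^2+qp-qr)$; the symmetric statement holds above $p^\star$, and bounded monotone sequences converge to a fixed point, which must be $p^\star$.

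\textbf{Gain convergence and consistency check.} The gain $k_t$ converges to $k^\star$ by continuity of $p \mapsto (p+q)/(p+q+r)$. As a sanity check, $k^\star \to 0$ as $q/r \to 0$ (recovering the decaying regime of Proposition~\ref{prop:static_gain}), and $k^\star \to 1$ as $q/r \to \infty$ (overwrite). The only real obstacle is this convergence step, and the uniform contraction bound handles it cleanly.
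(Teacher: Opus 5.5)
Your proposal is correct and follows essentially the same route as the paper: you collapse the Joseph update to $p_t=(1-k_t)p_t^-$, solve $p^2+qp-qr=0$ for its positive root, substitute into the gain, and prove convergence by showing $f(p)=r(p+q)/(p+q+r)$ is a contraction on $[0,\infty)$. Your uniform bound $f'(p)\le (r/(q+r))^2<1$ is a useful sharpening, since the paper only states the pointwise bound $f'(p)<1$, which on its own does not establish a contraction.
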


\begin{proof}
At steady state the posterior variance satisfies $p_t = p_{t-1} = p^\star$.
We derive the fixed-point equation in three steps.

\emph{Step~1: Predicted variance.}
The prediction step gives $p^- = p^\star + q$.

\emph{Step~2: Steady-state gain.}
The Kalman-style gain is
$k^\star = p^-/(p^- + r) = (p^\star + q)/(p^\star + q + r)$.

\emph{Step~3: Fixed-point equation.}
For the Kalman-style gain above, the Joseph form simplifies to $(1-k^\star)\,p^-$ (as shown in Eq.~\ref{eq:joseph_simplifies}), giving the fixed-point condition
\[
p^\star = (1-k^\star)\,p^-
= \frac{r}{p^- + r}\cdot p^-
= \frac{r\,(p^\star + q)}{p^\star + q + r}.
\]
By cross-multiplying and expanding the left side, the $r\,p^\star$ terms cancel, yielding the quadratic
\begin{equation}
(p^\star)^2 + q\,p^\star - qr = 0.
\label{eq:pstar_quadratic}
\end{equation}
The quadratic formula gives Eq.~\eqref{eq:pstar}.
Substituting $p^\star$ back into $k^\star = (p^\star+q)/(p^\star+q+r)$ and noting that $p^\star + q = (q + \sqrt{q^2+4qr})/2$ yields Eq.~\eqref{eq:steady_gain}.
Finally, to show the convergence, write the scalar variance recursion as
\[
p_t = f(p_{t-1}), \qquad
f(p)=\frac{r(p+q)}{p+q+r}.
\]
For all \(p\ge 0\),
\[
f'(p)=\frac{r^2}{(p+q+r)^2}<1,
\]
so \(f\) is a contraction on \([0,\infty)\).
Therefore the positive fixed point is unique and the recursion converges to it.
\end{proof}

Proposition~\ref{prop:steady_state} gives the right interpretation of the adaptive process-noise term:
with fixed $r$, the ratio $q/r$ determines the operating point on the gain curve.
A fixed gate chooses one point on this curve.
TTT3R can still change its gate heuristically from frame to frame, but it does so without a propagated covariance state and therefore without the same confidence-accumulation mechanism.
FILT3R combines both ingredients: $q_t$ provides frame-dependent adaptivity, while $p_t$ carries accumulated confidence across time.

\begin{figure}[!tb]
\centering
\includegraphics[width=0.75\linewidth]{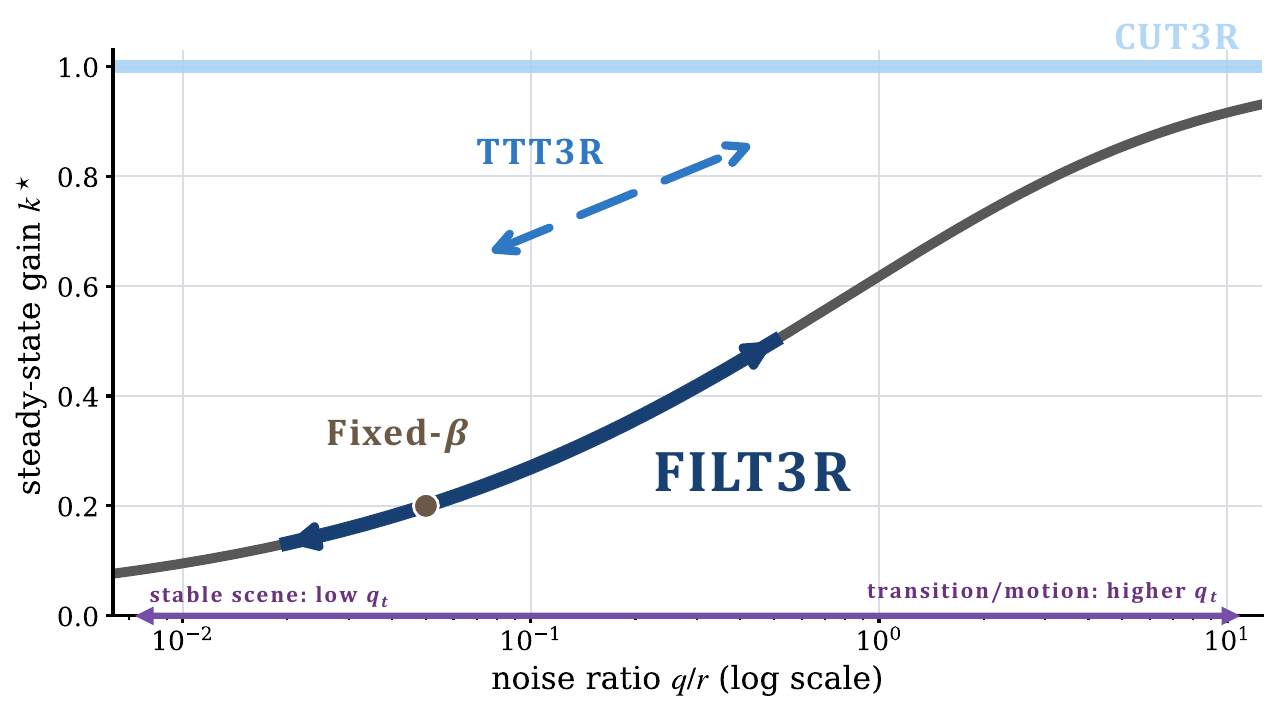}
\caption{\textbf{How different update rules relate to the gain curve.} With fixed $r$, the steady-state Kalman-style gain is determined by the noise ratio ($q/r$). Overwrite (CUT3R) corresponds to the limit ($k\to1$). A fixed gate chooses one constant operating point. FILT3R adapts $q_t$ and therefore moves along the curve, becoming conservative in stable regimes and reopening at transitions. TTT3R is also adaptive, but its gate is heuristic and based on current cues rather than propagated covariance, so it is shown schematically off the steady-state curve.
}
\label{fig:kstar_curve}
\end{figure}

\subsection{Implemented gain floor}
\label{sec:gain_floor}

If a token remains in a stable regime so that $q_{t,i}\approx q_{\min}$, Proposition~\ref{prop:steady_state} implies the unclamped steady-state gain
\begin{equation}
k_{\mathrm{floor}}^\star
=
\frac{\sqrt{q_{\min}^2+4q_{\min}r}+q_{\min}}
{\sqrt{q_{\min}^2+4q_{\min}r}+q_{\min}+2r}.
\label{eq:gain_floor}
\end{equation}
For the actual hyperparameters ($q_{\min}=0.02$, $r=1.0$), Eq.~\eqref{eq:gain_floor} gives $k_{\mathrm{floor}}^\star\approx 0.132$ before applying the clamp.
The practical interpretation is therefore not that FILT3R freezes completely, but that it settles into a substantially more conservative update regime while preserving responsiveness to scene change.

This distinction matters for interpretation.
The idealized $\mathcal{O}(1/t)$ result explains why recursive uncertainty propagation expands the effective memory horizon relative to overwrite or fixed-gain updates.
The implemented model inherits the same qualitative mechanism, but its asymptotic behavior is controlled by $q_{\min}$ and $[k_{\min},k_{\max}]$.
Figure~\ref{fig:gain_floor_diag} shows that on long sequences, FILT3R's gain approaches the predicted floor, confirming that the update has converged to its stable-regime operating point.

\begin{figure}[!tb]
\centering
\includegraphics[width=0.9\linewidth]{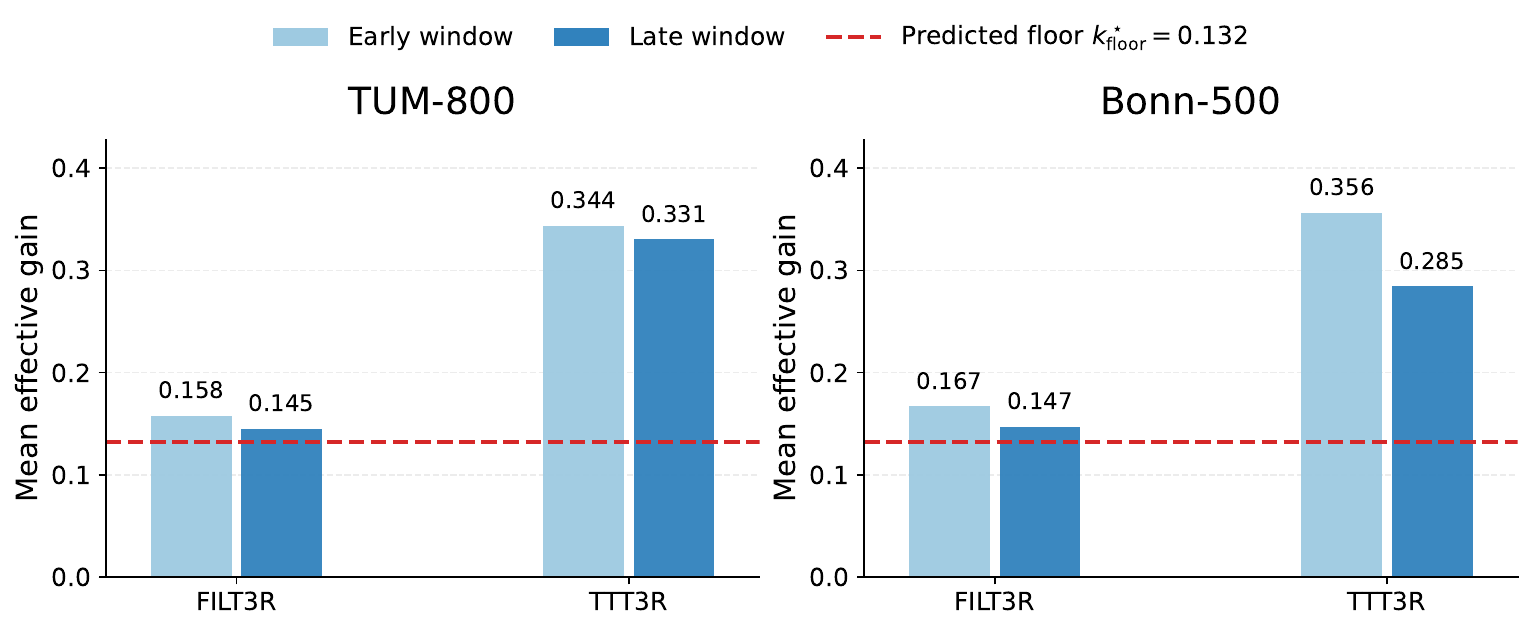}
\caption{\textbf{The deployed filter settles near a finite gain floor rather than freezing completely.}
The dashed line marks the implementation-faithful prediction $k_{\mathrm{floor}}^\star \approx 0.132$ from Eq.~\eqref{eq:gain_floor}.
Here, the early and late windows denote the first and last 20\% of frames of each analyzed sequence.
Across both TUM-800 and Bonn-500, FILT3R's late-window mean gain remains only $0.013$--$0.015$ above this floor, while TTT3R stays substantially more reactive.
This is consistent with FILT3R entering a conservative stable regime and reopening only when transition frames push $q_t$ upward.}
\label{fig:gain_floor_diag}
\end{figure}

\subsection{Distinguishing FILT3R from fixed-EMA smoothing}
\label{sec:not_ema}

The central difference between FILT3R and overwrite, heuristic gating, or fixed-EMA smoothing is that FILT3R \emph{propagates uncertainty}.
This propagation allows the update to become increasingly conservative as evidence accumulates, while still reopening when elevated process noise signals a genuine scene transition.
FILT3R does not merely select a different constant for the gate; it changes the \emph{dynamics} of the gate itself, making it time-dependent through propagated uncertainty.

A natural alternative hypothesis is that a globally conservative fixed EMA already explains the observed long-horizon gains.
The fixed-$\beta$ comparison in Table~\ref{tab:ablation} reveals a more nuanced picture: very small fixed $\beta$ can indeed suppress TUM-800 drift and even improve Bonn-500 Abs Rel, but at the cost of over-smoothing local motion.
For instance, reading from the extended sweep in Figure~\ref{fig:fixedbeta_long}, $\beta=0.01$ reduces TUM-800 $\text{ATE}_{\text{orig}}$ to 0.080 and Bonn-500 Abs Rel to 0.085, yet its TUM-800 RPE-r jumps to 1.91; even $\beta=0.03$ yields RPE-r of 1.25.
FILT3R therefore acts as a \emph{selective} smoother: it contracts updates as evidence accumulates, but does not commit to a single forgetting rate for the entire stream.

In our experiments, the fixed-$\beta$ baseline in Table~\ref{tab:ablation} uses the development-set choice $\beta=0.05$.
In the fixed-EMA development sweep, $\beta$ is selected by minimizing the sum of camera and depth metrics for TUM-200 and Bonn-150.
The longer-horizon test benchmarks in Figure~\ref{fig:fixedbeta_long} were not used in that choice.

\begin{table}[!tb]
\centering
\caption{\textbf{FILT3R versus fixed smoothing: ablations on TUM-800 and Bonn-500.}
Removing adaptive process noise or variance propagation weakens long-horizon stability.
A tuned fixed-$\beta$ EMA can suppress certain global drift metrics, but does so by uniformly damping all updates, which substantially degrades local rotation accuracy.
No single constant coefficient matches FILT3R's joint performance across long-horizon drift, local motion accuracy, and depth quality.}
\label{tab:ablation_appendix}
\small
\setlength{\tabcolsep}{4pt}
\resizebox{\linewidth}{!}{%
\begin{tabular}{l|cccc|ccc}
\toprule
\multirow{2}{*}{\textbf{Variant}} &
\multicolumn{4}{c|}{\textbf{TUM-800 (pose)}} &
\multicolumn{3}{c}{\textbf{Bonn-500 (metric depth)}}\\
\cmidrule(lr){2-5}\cmidrule(lr){6-8}
& ATE$\downarrow$ & $\text{ATE}_{\text{orig}}\downarrow$ & RPE-t$\downarrow$ & RPE-r$\downarrow$
& AbsRel$\downarrow$ & $\delta{<}1.25\uparrow$ & logRMSE$\downarrow$ \\
\midrule
FILT3R (full)                            & 0.057 & 0.107 & 0.009 & 0.362 & 0.089 & 94.4 & 0.148 \\
Fixed-$\mathbf{Q}$ (fixed $r$)           & 0.100 & 0.229 & 0.009 & 0.393 & 0.099 & 91.9 & 0.155 \\
No variance propagation (reset-$\mathbf{P}$) & 0.149 & 0.489 & 0.009 & 0.446 & 0.099 & 91.1 & 0.154 \\
Fixed-$\beta$ EMA ($\beta{=}0.05$, tuned) & 0.049 & 0.078 & 0.010 & 0.658 & 0.093 & 93.5 & 0.152 \\
No EMA normalization for drift           & 0.058 & 0.109 & 0.010 & 0.367 & 0.091 & 94.1 & 0.150 \\
Adaptive measurement noise ($r_t$)       & 0.074 & 0.148 & 0.009 & 0.372 & 0.093 & 93.9 & 0.151 \\
\bottomrule
\end{tabular}}
\end{table}

\paragraph{Exact definitions of the ablations.}
\emph{Fixed-$\mathbf{Q}$} replaces Eq.~\eqref{eq:q_gate_main} with a stream-independent constant $q_{t,i}\equiv\bar q$ while keeping variance propagation and fixed $r$.
The ablation defaults to $\bar q=\tfrac{1}{2}(q_{\min}+q_{\max})=0.26$ ($q_{\min}=0.02$, $q_{\max}=0.5$).
\emph{No variance propagation (reset-$\mathbf{P}$)} resets $p_{t-1,i}$ to the initialization value before each update, so the gain uses the same functional form but cannot accumulate confidence across frames.
\emph{Fixed-$\beta$ EMA} removes the Kalman variance recursion and sets $k_{t,i}\equiv\beta$ for all tokens and frames.
\emph{No EMA normalization} feeds raw drift magnitudes into the same sigmoid gate instead of the normalized score $g_{t,i}$.
\emph{Adaptive measurement noise} replaces the scalar $r$ with token-wise $r_{t,i}$ estimated from normalized attention entropy while leaving the rest of the update unchanged.
For the reported ablation, let $a_{t,i,j}$ denote the absolute aggregated cross-attention from state token $i$ to image token $j$ in the first cross-attention layer, let
\[
p_{t,i,j}=\frac{a_{t,i,j}}{\sum_j a_{t,i,j}+\varepsilon},
\qquad
H_{t,i}=-\frac{\sum_j p_{t,i,j}\log p_{t,i,j}}{\log K+\varepsilon},
\]
where $K$ is the number of image tokens, and let $\hat{H}_t$ be the EMA of the sequence-mean entropy with update rate $0.05$.
The ablation then uses
\[
r_{t,i}=r_{\min}+r_{\mathrm{scale}}\,
\sigma\!\Big(\alpha_r\big(H_{t,i}/\hat{H}_t-\tau_r\big)\Big),
\]
with $r_{\min}=1.0$, $r_{\mathrm{scale}}=1.0$, $\alpha_r=8.0$, and $\tau_r=1.0$.
The entropy signal is taken from the first cross-attention layer in all reported adaptive-$r_t$ runs.

\paragraph{Diagnostic conventions used in Figures~\ref{fig:gain_floor_diag} and~\ref{fig:transition_timeline}.}
The gain-floor summary in Figure~\ref{fig:gain_floor_diag} uses the first and last 20\% of frames as the early and late windows.
For the transition timelines in Figure~\ref{fig:transition_timeline}, the \emph{transition score} is the per-frame mean normalized drift
$\bar g_t=\tfrac{1}{N}\sum_i g_{t,i}$ from Eq.~\eqref{eq:q_gate_main}.
The plotted score is an MA-11 curve, meaning a trailing moving average over the previous 11 frames (including frame $t$).
The \emph{transition threshold} is the 90th percentile of this smoothed FILT3R score, and detected transition windows are contiguous regions above that threshold.
\emph{Effective gain} denotes the per-frame mean of the applied per-token gain $k_{t,i}$ after clamping, \emph{posterior covariance} denotes the per-frame mean of the post-update variance $p_{t,i}$, and the \emph{update ratio} is
\[
\rho_t=
\frac{\frac{1}{N}\sum_i \lVert s_{t,i}-s_{t-1,i}\rVert_2}
{\frac{1}{N}\sum_i \lVert \tilde{s}_{t,i}-s_{t-1,i}\rVert_2},
\]
that is, the mean applied update magnitude divided by the mean candidate-update magnitude.

\begin{figure}[!t]
\centering
\begin{minipage}[t]{0.74\linewidth}
\centering
\includegraphics[width=\linewidth]{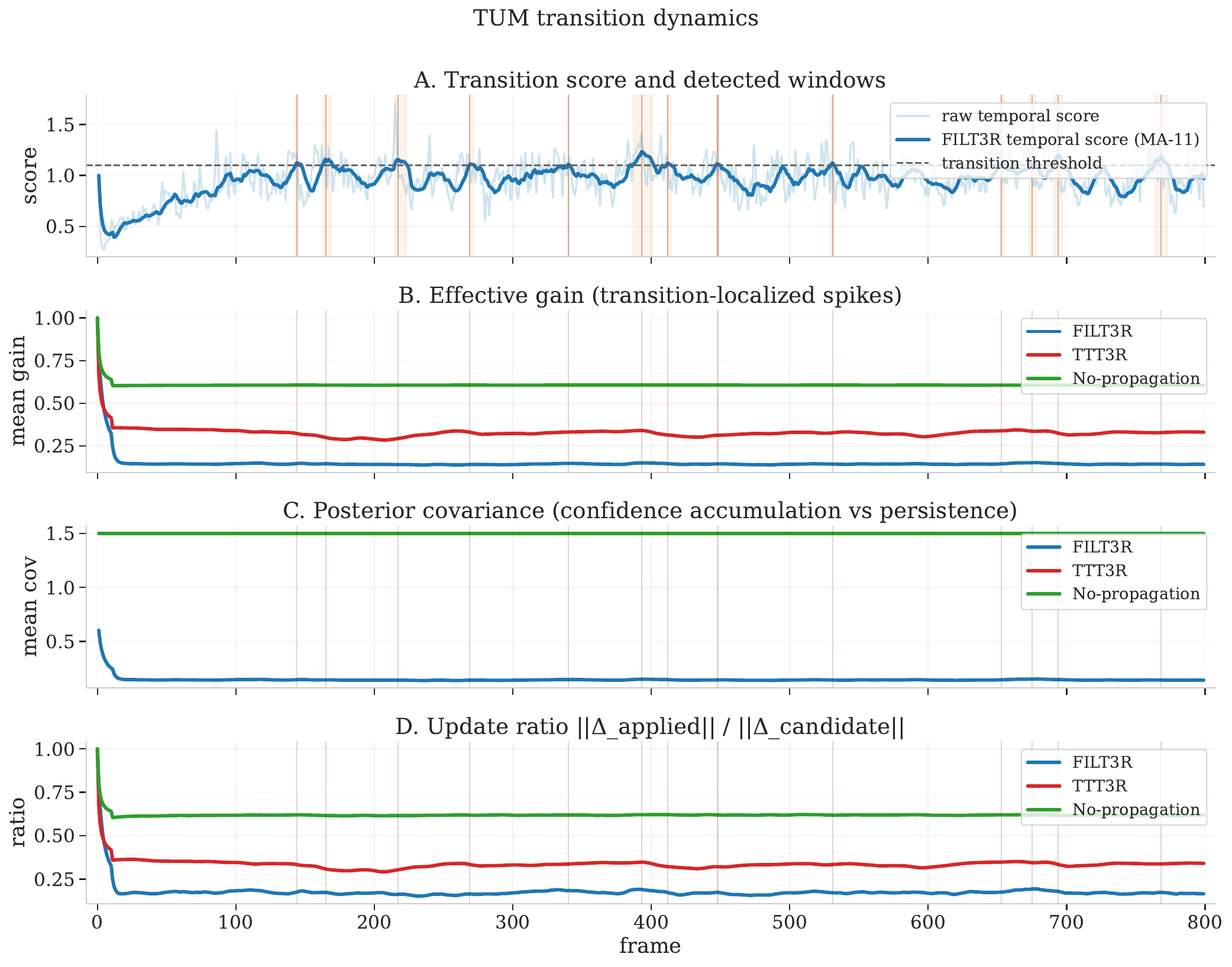}
\caption*{(a) TUM-800 transition timeline}
\end{minipage}

\vspace{0pt}
\begin{minipage}[t]{0.74\linewidth}
\centering
\includegraphics[width=\linewidth]{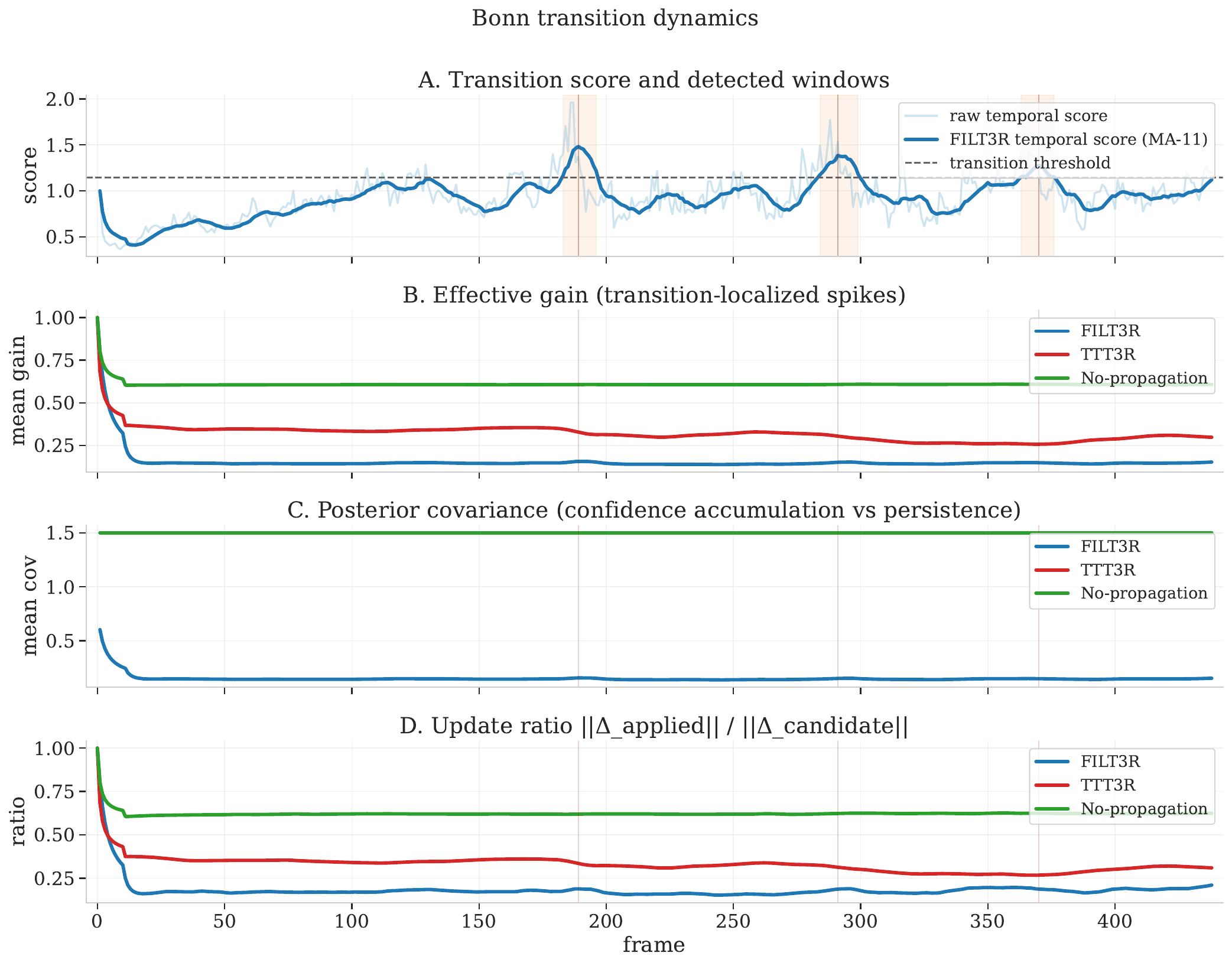}
\caption*{(b) Bonn-500 transition timeline}
\end{minipage}
\caption{\textbf{FILT3R contracts updates in stable regimes and reopens at transitions.}
Panel A plots the per-frame mean temporal score $\bar g_t$ together with its MA-11 smoothing, the 90th-percentile transition threshold, and the detected above-threshold windows.
Panels B--D report the corresponding mean effective gain, mean posterior covariance, and update ratio $\rho_t$.
Detected transition windows raise process noise, while propagated covariance keeps FILT3R in a substantially lower-update regime than both TTT3R and the no-propagation ablation.
This behavior provides the mechanistic explanation for the slower error growth observed in Figure~\ref{fig:long_horizon_metrics}.}
\label{fig:transition_timeline}
\end{figure}

\section{Why periodic hard resets are not sufficient}
\label{app:reset_analysis}
An alternative strategy for managing long-horizon drift is to periodically reset the recurrent state.
We compare against the periodic hard-reset baseline used in the official TTT3R evaluation setup: the recurrent state is reinitialized every 100 frames while the frozen backbone and evaluator settings remain unchanged.
At each reset, the persistent state and local memory return to their start-of-sequence values and the temporal buffers are cleared.
While resets can reduce within-segment drift, they also discard accumulated scene context and risk introducing scale discontinuities at reset boundaries.
The relevant criterion is whether resets preserve a coherent belief state over the full sequence, not merely whether they reduce within-segment drift.

\begin{figure}[!tb]
\centering
\includegraphics[width=\linewidth]{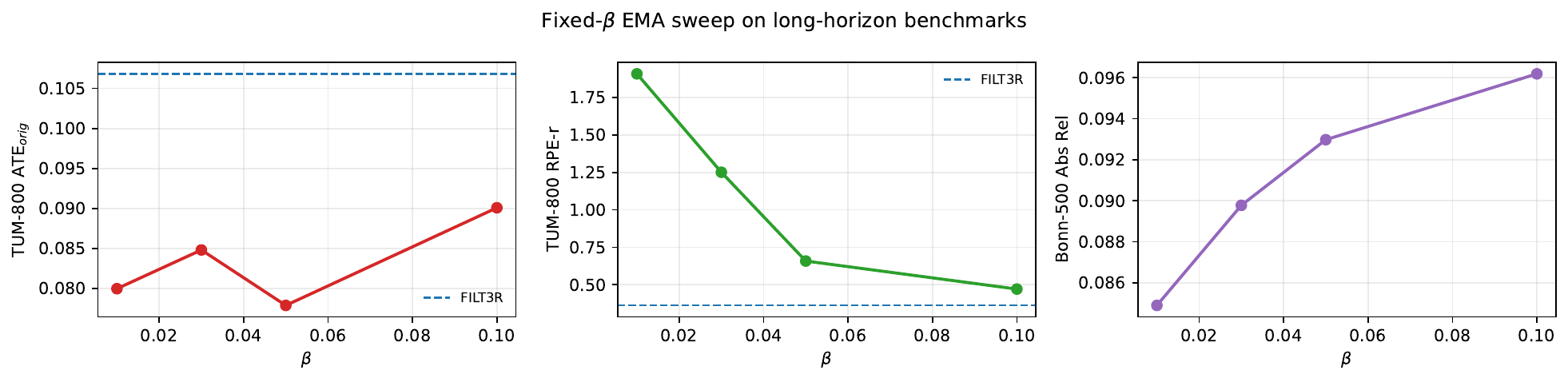}
\caption{\textbf{Extended long-horizon fixed-$\beta$ sweep.}
Very small fixed EMA coefficients can outperform FILT3R on individual long-horizon metrics, but only by globally over-smoothing the stream: the same settings that reduce $\text{ATE}_{\text{orig}}$ or improve depth also produce substantially elevated TUM-800 RPE-r.
FILT3R occupies a stronger operating region because it does not commit to a single forgetting rate for the entire rollout.}
\label{fig:fixedbeta_long}
\end{figure}

\begin{figure}[!tb]
\centering
\includegraphics[width=\linewidth]{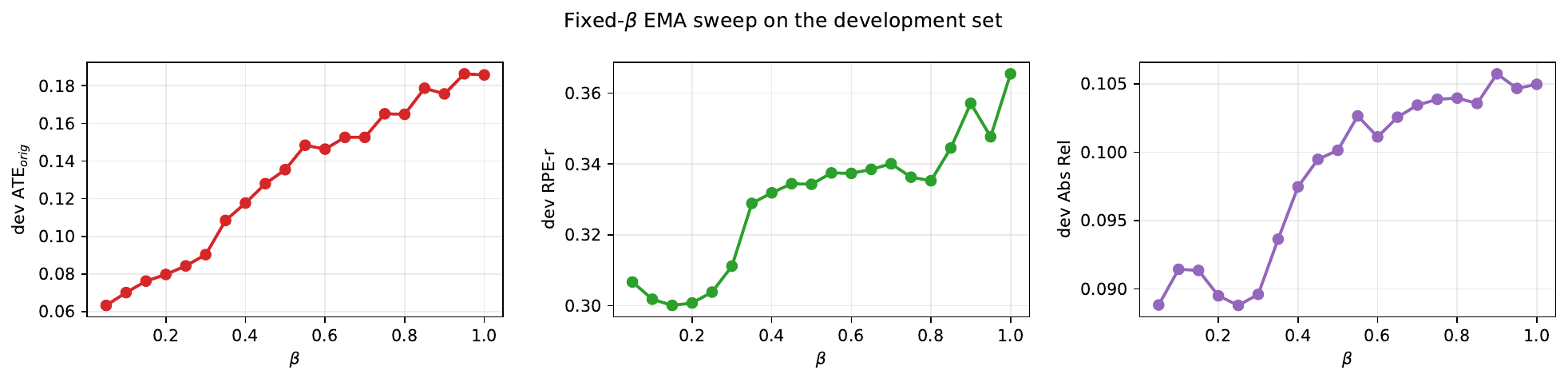}
\caption{\textbf{Development-set fixed-$\beta$ sweep.}
This sweep provides context for representative fixed-EMA operating points and shows the development-set optimum at $\beta=0.05$ under the explicit objective above.
The more informative comparison is the long-horizon sweep above: a fixed gate can be tuned toward one regime, yet it remains globally conservative rather than transition-aware.}
\label{fig:fixedbeta_dev}
\end{figure}

Tables~\ref{tab:pose_long_reset} and~\ref{tab:depth_long_reset} address this question on TUM-RGBD (camera pose) and Bonn (depth), respectively.
On TUM-RGBD (Table~\ref{tab:pose_long_reset}), TTT3R+Reset does improve locally aligned metrics such as ATE and RPE at moderate lengths, yet FILT3R remains substantially better on origin-aligned trajectory error at all evaluated horizons---for example, $\text{ATE}_{\text{orig}}$ at TUM-800 is 0.107 for FILT3R versus 0.221 for TTT3R+Reset.
On Bonn (Table~\ref{tab:depth_long_reset}), per-sequence scale alignment masks the issue: resets appear helpful at shorter horizons.
Under metric-scale evaluation, however, FILT3R consistently outperforms the reset baseline as the horizon grows, because it avoids the scale discontinuities that resets introduce at segment boundaries.
Figure~\ref{fig:reset_sweeps} visualizes these trends as a function of sequence length, and Figure~\ref{fig:reset_failures} shows representative qualitative failure cases where resets cause visible scale jumps.

\begin{table*}[!tb]
\centering
\caption{\textbf{Resets improve within-segment metrics but fail to reduce cumulative drift.}
On TUM-RGBD, TTT3R+Reset improves certain locally aligned metrics, yet FILT3R remains substantially better on origin-aligned trajectory error, which directly exposes accumulated drift.}
\label{tab:pose_long_reset}
\resizebox{\textwidth}{!}{%
\begin{tabular}{l | cccc | cccc | cccc}
\toprule
\multirow{2}{*}{\textbf{Method}} & \multicolumn{4}{c|}{\textbf{TUM-400}} & \multicolumn{4}{c|}{\textbf{TUM-600}} & \multicolumn{4}{c}{\textbf{TUM-800}}\\
\cmidrule(lr){2-5}\cmidrule(lr){6-9}\cmidrule(lr){10-13}
 & ATE$\downarrow$ & $\text{ATE}_{\text{orig}}\!\downarrow$ & RPE-t$\downarrow$ & RPE-r$\downarrow$
 & ATE$\downarrow$ & $\text{ATE}_{\text{orig}}\!\downarrow$ & RPE-t$\downarrow$ & RPE-r$\downarrow$
 & ATE$\downarrow$ & $\text{ATE}_{\text{orig}}\!\downarrow$ & RPE-t$\downarrow$ & RPE-r$\downarrow$ \\
\midrule
CUT3R         & 0.109 & 0.302 & 0.010 & 0.381 & 0.145 & 0.425 & 0.008 & 0.430 & 0.173 & 0.493 & 0.008 & 0.486 \\
Point3R       & 0.119 & 0.179 & 0.028 & 0.640 & 0.133 & 0.205 & 0.026 & 0.912 & 0.157 & 0.241 & 0.024 & 1.127 \\
TTT3R         & 0.055 & 0.128 & 0.010 & 0.328 & 0.084 & 0.176 & 0.009 & 0.365 & 0.097 & 0.214 & 0.009 & 0.387 \\
TTT3R+Reset   & 0.047 & 0.140 & \textbf{0.008} & 0.308 & 0.059 & 0.181 & \textbf{0.007} & \textbf{0.331} & 0.077 & 0.221 & \textbf{0.007} & \textbf{0.338} \\
\oursrow
\oursname     & \textbf{0.033} & \textbf{0.074} & 0.009 & \textbf{0.305} & \textbf{0.042} & \textbf{0.082} & 0.009 & 0.332 & \textbf{0.057} & \textbf{0.107} & 0.009 & 0.362 \\
\bottomrule
\end{tabular}}
\end{table*}

\begin{table*}[!tb]
\centering
\caption{\textbf{Reset depth comparison on Bonn.}
(a) Per-sequence scale alignment at 300, 400, and 500 frames shows that resets can help transiently.
(b) Under metric-scale evaluation, however, FILT3R remains consistently stronger as the horizon grows, because it avoids the scale inconsistencies introduced at reset boundaries.}
\label{tab:depth_long_reset}

{\small (a) \textbf{Per-sequence scale} (Bonn-300 / Bonn-400 / Bonn-500)}
\vspace{1pt}

\resizebox{\textwidth}{!}{%
\begin{tabular}{l | ccc | ccc | ccc}
\toprule
\multirow{2}{*}{\textbf{Method}} &
\multicolumn{3}{c|}{\textbf{Bonn-300}} &
\multicolumn{3}{c|}{\textbf{Bonn-400}} &
\multicolumn{3}{c}{\textbf{Bonn-500}}\\
\cmidrule(lr){2-4}\cmidrule(lr){5-7}\cmidrule(lr){8-10}
& Abs Rel$\downarrow$ & $\delta{<}1.25\uparrow$ & log RMSE$\downarrow$ & Abs Rel$\downarrow$ & $\delta{<}1.25\uparrow$ & log RMSE$\downarrow$ & Abs Rel$\downarrow$ & $\delta{<}1.25\uparrow$ & log RMSE$\downarrow$ \\
\midrule
CUT3R       & 0.090 & 93.8 & 0.150 & 0.091 & 93.5 & 0.150 & 0.086 & 94.0 & 0.145 \\
Point3R     & 0.083 & 94.7 & 0.149 & 0.083 & 94.6 & 0.147 & 0.081 & 94.8 & 0.145 \\
TTT3R       & \textbf{0.079} & 95.0 & \textbf{0.142} & 0.078 & 95.1 & \textbf{0.139} & \textbf{0.075} & 95.3 & \textbf{0.136} \\
TTT3R+Reset & 0.085 & 94.3 & 0.147 & 0.082 & 94.5 & 0.142 & 0.081 & 94.8 & 0.141 \\
\oursrow
\oursname   & 0.080 & \textbf{95.1} & 0.143 & \textbf{0.078} & \textbf{95.3} & 0.140 & 0.076 & \textbf{95.4} & 0.138 \\
\bottomrule
\end{tabular}}

\vspace{4pt}
{\small (b) \textbf{Metric scale} (Bonn-300 / Bonn-400 / Bonn-500)}
\vspace{1pt}

\resizebox{\textwidth}{!}{%
\begin{tabular}{l | ccc | ccc | ccc}
\toprule
\multirow{2}{*}{\textbf{Method}} &
\multicolumn{3}{c|}{\textbf{Bonn-300}} &
\multicolumn{3}{c|}{\textbf{Bonn-400}} &
\multicolumn{3}{c}{\textbf{Bonn-500}}\\
\cmidrule(lr){2-4}\cmidrule(lr){5-7}\cmidrule(lr){8-10}
& Abs Rel$\downarrow$ & $\delta{<}1.25\uparrow$ & log RMSE$\downarrow$ & Abs Rel$\downarrow$ & $\delta{<}1.25\uparrow$ & log RMSE$\downarrow$ & Abs Rel$\downarrow$ & $\delta{<}1.25\uparrow$ & log RMSE$\downarrow$ \\
\midrule
CUT3R       & 0.107 & 88.7 & 0.162 & 0.107 & 89.6 & 0.161 & 0.101 & 90.6 & 0.156 \\
Point3R     & 0.095 & \textbf{94.4} & 0.155 & 0.092 & \textbf{94.3} & 0.152 & 0.092 & 94.4 & 0.151 \\
TTT3R       & 0.108 & 90.1 & 0.163 & 0.104 & 91.2 & 0.158 & 0.100 & 92.1 & 0.154 \\
TTT3R+Reset & 0.097 & 92.2 & 0.157 & 0.094 & 92.9 & 0.151 & 0.094 & 93.5 & 0.151 \\
\oursrow
\oursname   & \textbf{0.093} & 93.8 & \textbf{0.152} & \textbf{0.090} & 94.1 & \textbf{0.149} & \textbf{0.089} & \textbf{94.4} & \textbf{0.148} \\
\bottomrule
\end{tabular}}
\end{table*}

\section{Evaluation protocol and reproducibility}
\label{sec:appendix_protocol}
\paragraph{Common setup.}
Unless stated otherwise, CUT3R, TTT3R, TTT3R+Reset, and FILT3R share the same frozen CUT3R checkpoint as in the main paper; only the online update rule differs.
Point3R results are reproduced with the official codebase.
Unless stated otherwise, every quantitative result in this appendix uses the main FILT3R model with a single set of hyperparameters across all tasks.
For each dataset, we extract frames at the native frame rate and truncate sequences to fixed prefix lengths; we refer to these as \emph{prepared} sequences throughout.

\begin{figure}[!tb]
\centering
\begin{minipage}[t]{0.48\linewidth}
\centering
\includegraphics[width=0.8\linewidth]{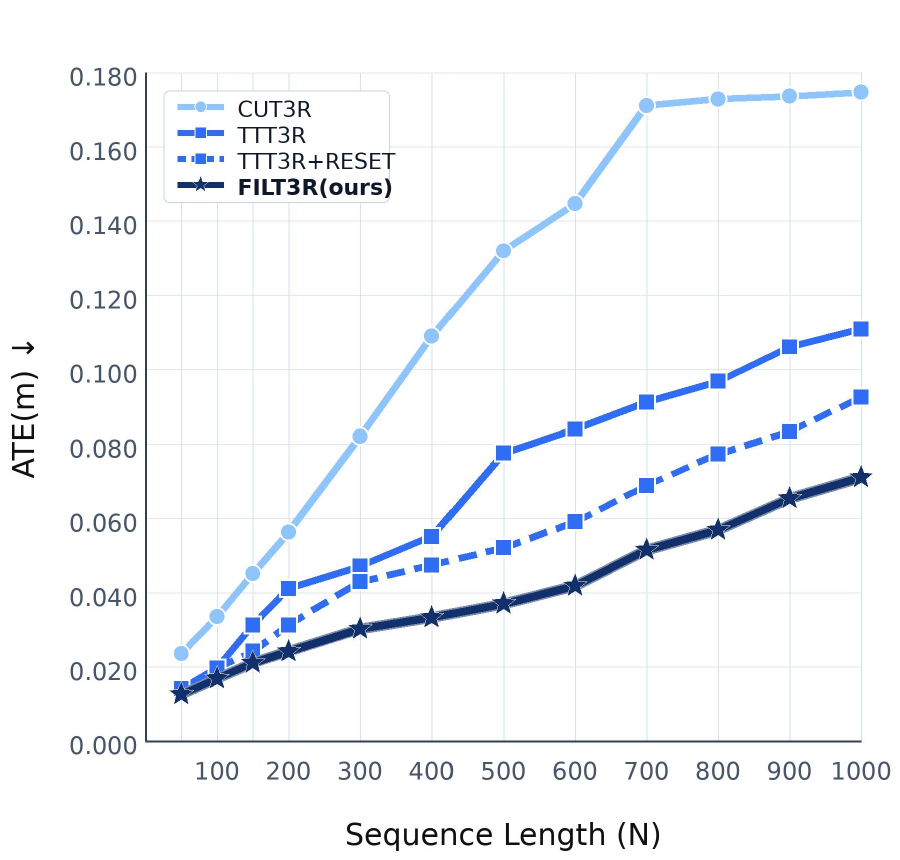}
\caption*{(a) TUM-RGBD: ATE with reset}
\end{minipage}\hfill
\begin{minipage}[t]{0.48\linewidth}
\centering
\includegraphics[width=0.8\linewidth]{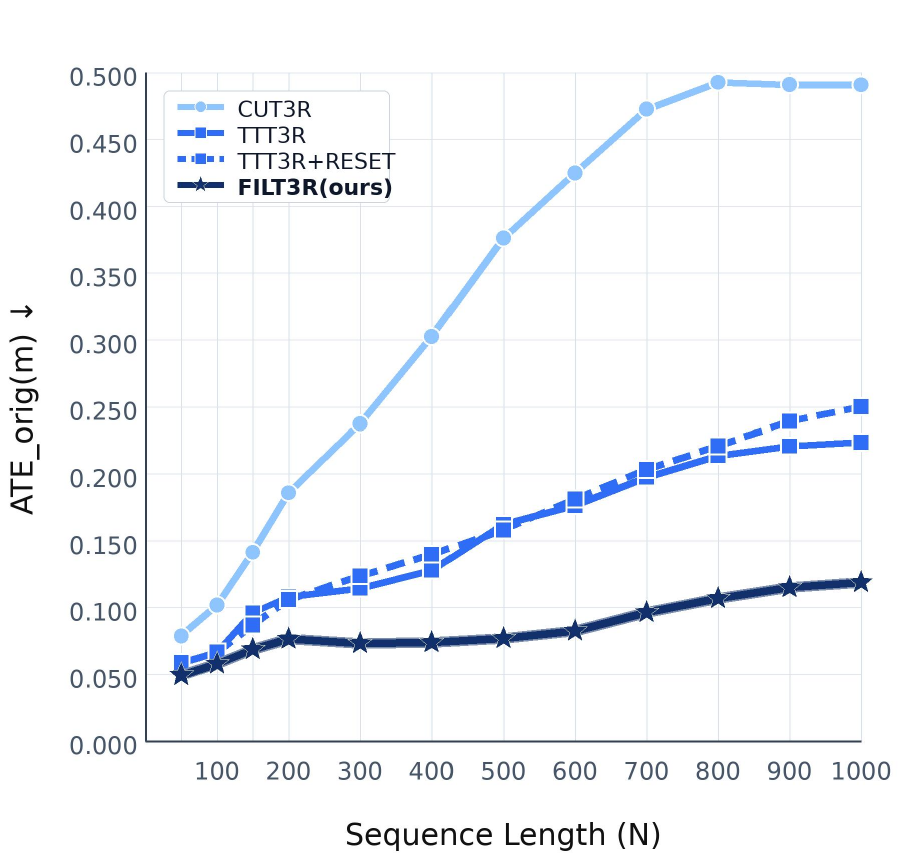}
\caption*{(b) TUM-RGBD: $\text{ATE}_{\text{orig}}$ with reset}
\end{minipage}

\begin{minipage}[t]{0.48\linewidth}
\centering
\includegraphics[width=0.8\linewidth]{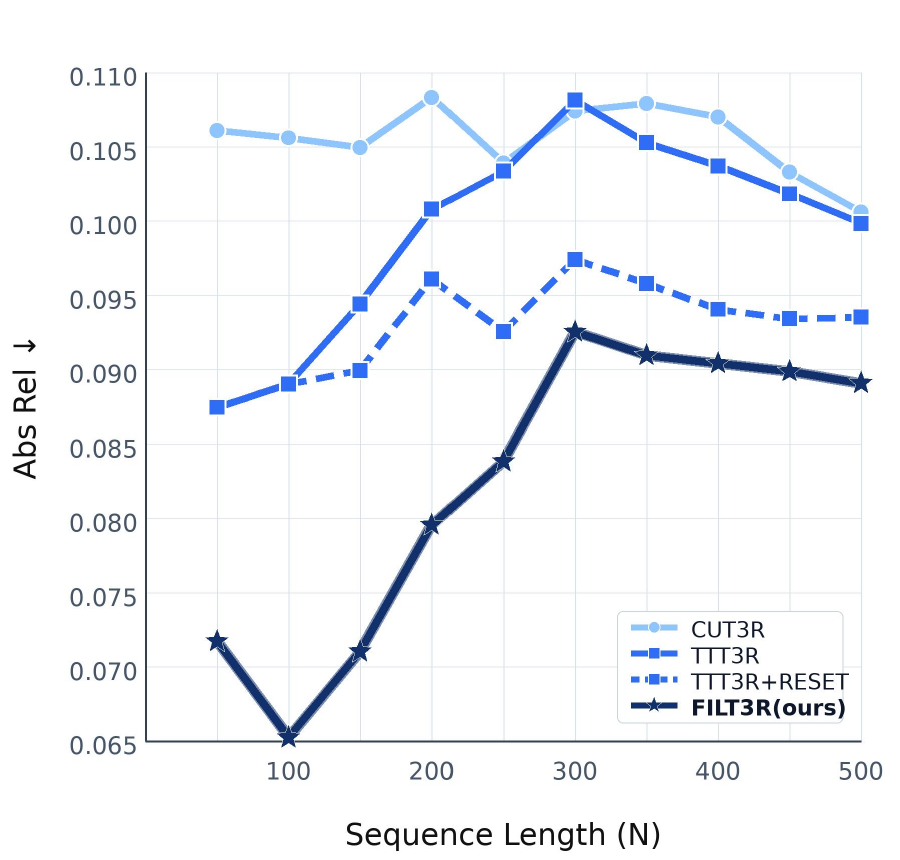}
\caption*{(c) Bonn: Abs Rel with reset}
\end{minipage}\hfill
\begin{minipage}[t]{0.48\linewidth}
\centering
\includegraphics[width=0.8\linewidth]{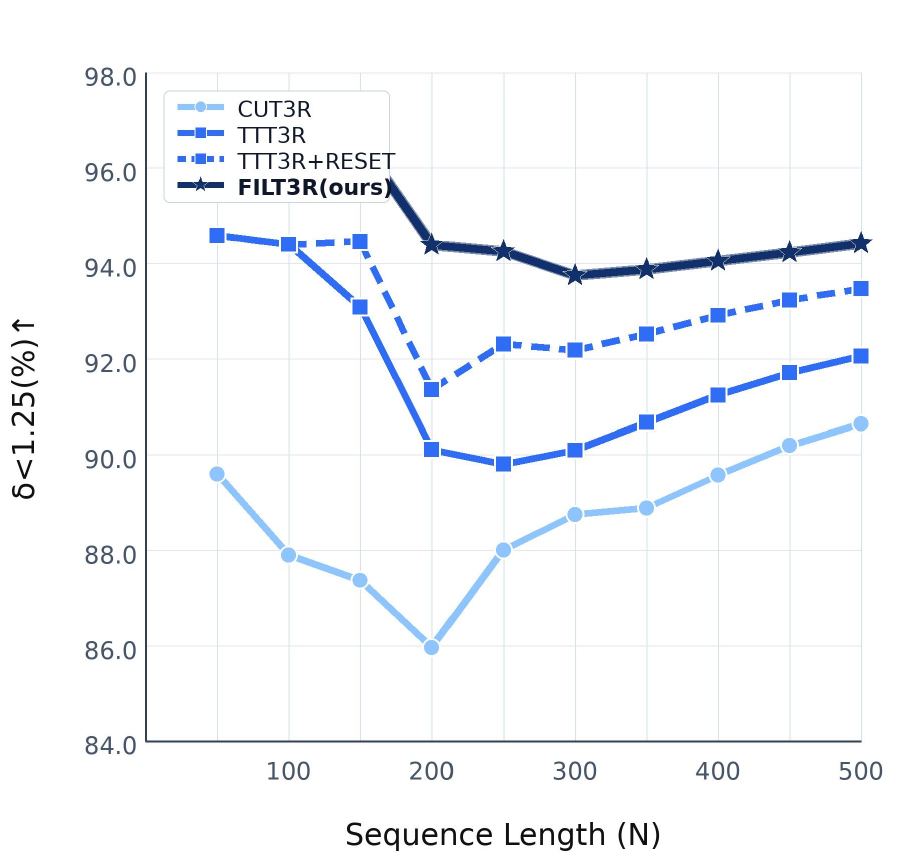}
\caption*{(d) Bonn: $\delta{<}1.25$ with reset}
\end{minipage}
\caption{\textbf{Resets improve short segments but not the persistent state.}
The reset baseline reduces locally aligned drift at moderate lengths, but its benefit diminishes as scale discontinuities accumulate across reset boundaries.
FILT3R instead maintains a single continuously filtered state throughout the rollout.}
\label{fig:reset_sweeps}
\end{figure}

\begin{figure}[!tb]
\centering
\includegraphics[width=0.98\linewidth]{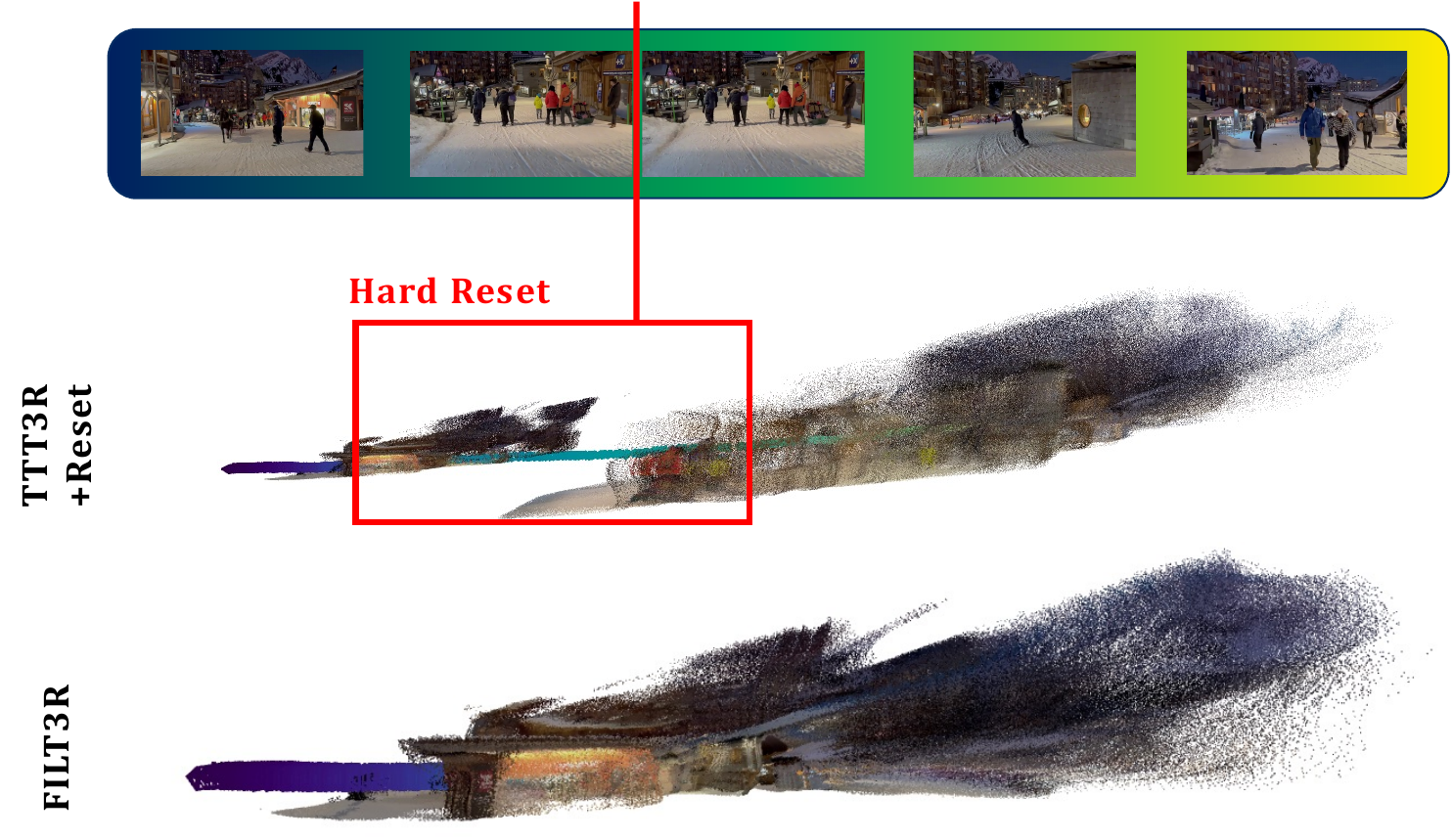}
\caption{\textbf{Representative reset failure cases.}
Resetting discards previously accumulated scene context, causing scale jumps near reset boundaries and inconsistent behavior on revisited regions.
FILT3R avoids these artifacts by maintaining a continuously filtered latent state.}
\label{fig:reset_failures}
\end{figure}

\paragraph{Training-bound reference and evaluator settings.}
The public backbone checkpoint is \texttt{cut3r\_512\_dpt\_4\_64.pth}.
This released CUT3R model is trained on \emph{4--64 views}, so Table~\ref{tab:protocol} reports evaluation lengths relative to the conservative 64-view upper training bound.
The same table also collects the evaluator-side settings that materially affect the reported metrics: alignment mode for pose, scale handling for depth, valid-depth masks, and reconstruction crop / ICP settings.

\begin{table*}[!tb]
\centering
\caption{\textbf{Exact evaluator settings and relation to the released checkpoint's training bound.}
The checkpoint tag and README specify a 4--64-view model; we therefore state how far each benchmark extends beyond the 64-view upper bound.}
\label{tab:protocol}
\small
\setlength{\tabcolsep}{4pt}
\resizebox{\textwidth}{!}{%
\begin{tabular}{l|c|c|c|p{7.9cm}}
\toprule
\textbf{Benchmark} & \textbf{Training ref.} & \textbf{Prefix lengths} & \textbf{Prefix / 64} & \textbf{Evaluator details} \\
\midrule
TUM-RGBD pose & CUT3R 4--64 views & 50--1000 & $0.78\times$--$15.63\times$ & ATE uses \texttt{align=True}, \texttt{correct\_scale=True}; $\text{ATE}_{\text{orig}}$ uses \texttt{align\_origin=True}; RPE is evaluated at $\Delta{=}1$ frame with similarity alignment. \\
Bonn depth & CUT3R 4--64 views & 50--500 & $0.78\times$--$7.81\times$ & Main long-horizon table uses metric scale; the reset comparison additionally reports per-sequence scale alignment. Valid pixels satisfy GT depth $>0$ and $<70$m; predictions are bicubically resized and averaged with valid-pixel weighting. \\
KITTI depth & CUT3R 4--64 views & 300 / 400 / 500 & $4.69\times$--$7.81\times$ & Both metric-scale and per-sequence-scale metrics are reported. The evaluator keeps the dataset's native depth range (\texttt{max\_depth=None}) and uses the same valid-pixel-weighted aggregation. \\
7-Scenes recon & CUT3R 4--64 views & 100 / 200 / 300 / 500 / 1000 & $1.56\times$--$15.63\times$ & Input size 512 with a 224-pixel center crop for point cloud optimization. With ICP applied to report Accuracy, Completeness, and Normal Consistency. \\
NRGBD recon & CUT3R 4--64 views & 300 / 400 / 500 & $4.69\times$--$7.81\times$ & The same reconstruction evaluator is used on the nine prepared NRGBD scenes.\\
\bottomrule
\end{tabular}}
\end{table*}

\paragraph{Initialization, reset, and efficiency-benchmark details.}
On the first valid frame of a stream, the state is updated by direct overwrite with the current candidate, the covariance is initialized uniformly to $p_0$, and the previous-candidate buffer is set to that same candidate.
The drift EMA is initialized only when the first temporal difference is available and is then clamped to $\Delta_{\mathrm{floor}}$.
For TTT3R+Reset, we follow the official 100-frame hard-reset setup: the persistent state and local memory are reinitialized to their start-of-sequence values and temporal buffers are cleared, with no interval tuning on the evaluation sets.
For the runtime / memory benchmark in the main paper, the protocol is NRGBD at 500 frames and $512\times384$ resolution, with 2 warmup runs and 10 timed runs, on a single-RTX 5090 32GB GPU.

\paragraph{Long-horizon camera pose on TUM-RGBD.}
Following~\cite{chen2026ttt3r}, we evaluate eight prepared TUM-dynamics sequences:
\begin{itemize}
    \item \texttt{freiburg3\_sitting\_\{halfsphere,rpy,static,xyz\}} \\
    \item \texttt{freiburg3\_walking\_\{halfsphere,rpy,static,xyz\}}.
\end{itemize}

For each prepared prefix length in \{50, 100, 150, 200, 300, 400, 500, 600, 700, 800, 900, 1000\}, we evaluate the full prefix and average metrics across the eight sequences.
ATE is computed with global similarity alignment via \texttt{evo} package;
$\text{ATE}_{\text{orig}}$ uses origin alignment only to expose accumulated drift; RPE is computed at $\Delta=1$ frame.

\paragraph{Long-horizon video depth on Bonn.}
Following~\cite{chen2026ttt3r}, we evaluate the five long Bonn sequences
\texttt{balloon2}, \texttt{crowd2}, \texttt{crowd3}, \texttt{person\_tracking2}, and \texttt{synchronous}.
Prepared prefix lengths are \{50, 100, 150, 200, 250, 300, 350, 400, 450, 500\}.
The main-paper long-horizon depth table uses metric-scale evaluation; the reset table additionally reports per-sequence scale alignment.
Depth metrics are aggregated with valid-pixel weighting.

\paragraph{Long-horizon video depth on KITTI.}
Following~\cite{chen2026ttt3r}, we additionally report both per-sequence-scale and metric-scale depth on prepared KITTI validation clips truncated to 300, 400, and 500 frames.
Depth metrics are aggregated with valid-pixel weighting.

\paragraph{3D reconstruction.}
7-Scenes evaluation uses the prepared scene set with the frame budgets used in the paper (100, 200, 300, 500, and 1000).
For 7-Scenes, we subsample every 2nd frame (stride~2) for sequence lengths below 500 and use every frame (stride~1) for the length-1000 evaluation.
NRGBD evaluation uses the nine prepared scenes \texttt{breakfast\_room}, \texttt{complete\_kitchen}, \texttt{green\_room}, \texttt{grey\_white\_room}, \texttt{kitchen}, \texttt{morning\_apartment}, \texttt{staircase}, \texttt{thin\_geometry}, and \texttt{whiteroom}.
The reconstruction setup uses resolution $512$ as input with a $224$-pixel center crop for the ICP algorithm.
Metrics are Accuracy, Completeness, and Normal Consistency.

\paragraph{Short-horizon benchmarks.}
Following \cite{chen2026ttt3r}, short-horizon depth is evaluated on the prepared Sintel split (14 sequences), the Bonn dynamics set with standard length (110), and KITTI validation crops.
The short-horizon reconstruction table uses 7-Scenes at 100 and 200 frames, which therefore use stride~2.

\paragraph{Model selection and freeze point.}
A single fixed set of FILT3R hyperparameters is used across all tasks.
We do \emph{not} deploy different hyperparameters for pose and depth.
Unlike the fixed-$\beta$ baseline in Sec.~A, the final fixed-$R$ FILT3R model was not selected by a post hoc sweep on the long-horizon test tables.
No single hidden scalar objective was used for that final model; the goal was one shared operating point that remained competitive on both the pose and depth development sets.
Candidate fixed-$R$ configurations were screened once on the pose development set (TUM-200) and on a held-out short-horizon depth development set, after which one shared configuration was frozen and reused for every result reported in the main paper and this appendix.
The neighboring variant (higher-ceiling, later-opening gate) in Table~\ref{tab:hparam_stress} and the isolated long-horizon wins of smaller fixed-$\beta$ values were observed only in retained evaluation runs after this freeze point and did not influence model choice.
Table~\ref{tab:hparam_stress} therefore serves as a targeted local sensitivity check rather than a dense robustness study.

\begin{table*}[!tb]
\centering
\caption{\textbf{Targeted perturbation check around the main configuration.}
These runs come from retained evaluation logs and are intended as a local sensitivity check rather than a dense robustness study.
The nearby setting stays in the same qualitative regime across both pose and reconstruction.}
\label{tab:hparam_stress}
{\small (a) \textbf{TUM-800 pose}}
\vspace{1pt}

\resizebox{0.65\textwidth}{!}{%
\begin{tabular}{l|ccc}
\toprule
\textbf{Variant} & ATE$\downarrow$ & $\text{ATE}_{\text{orig}}\downarrow$ & RPE-r$\downarrow$ \\
\midrule
Ours & 0.057 & 0.107 & \textbf{0.362} \\
Gate variant ($q_{\max}{=}0.8,\tau_q{=}4.0$) & 0.057 & 0.107 & 0.364 \\
\bottomrule
\end{tabular}}

\vspace{4pt}
{\small (b) \textbf{7-Scenes reconstruction at length 300}}
\vspace{1pt}

\resizebox{0.65\textwidth}{!}{%
\begin{tabular}{l|ccc}
\toprule
\textbf{Variant} & Acc (mean)$\downarrow$ & Comp (mean)$\downarrow$ & NC (mean)$\uparrow$ \\
\midrule
Ours & 0.020 & 0.022 & 0.568 \\
Gate variant ($q_{\max}{=}0.8,\tau_q{=}4.0$) & 0.020 & 0.022 & 0.568 \\
\bottomrule
\end{tabular}}
\end{table*}

\paragraph{Per-sequence robustness statistics.}
The gains in the main tables are not driven by a single favorable sequence.
Table~\ref{tab:per_sequence} summarizes the per-sequence runs at the longest horizons for which outputs are available for all compared methods: TUM-800 for pose and Bonn-400 for metric depth.
FILT3R wins all eight TUM sequences on $\text{ATE}_{\text{orig}}$ and wins three of the five Bonn sequences on Abs Rel, while also achieving the best mean and worst-case depth errors.

 \section{Additional long-horizon depth results on KITTI}
\label{app:kitti_long}

\paragraph{Discussion.}
Under per-sequence scale alignment (Table~\ref{tab:depth_kitti_long}a), FILT3R consistently outperforms both CUT3R and TTT3R across all KITTI horizons.
Under metric-scale evaluation (Table~\ref{tab:depth_kitti_long}b), however, TTT3R holds a small edge over FILT3R---for example, Abs~Rel of 0.132 versus 0.134 at KITTI-500---while both remain substantially better than CUT3R.
We hypothesize that this is related to KITTI's predominantly forward-driving motion, which produces relatively steady temporal drift and thus a setting where TTT3R's heuristic gate may already be well-calibrated.
Even in this regime, the gap to FILT3R remains small (within 0.002 Abs~Rel at KITTI-500), while FILT3R remains stronger whenever per-sequence scale alignment is available.

\begin{table*}[!tb]
\centering
\caption{\textbf{Per-sequence robustness at long horizons.}
TUM-800 pose uses all eight prepared TUM-dynamics sequences.
Bonn-400 metric depth uses the five long Bonn sequences, which is the longest horizon for which per-sequence outputs are available for all compared methods.}
\label{tab:per_sequence}
{\small (a) \textbf{TUM-800 pose}}
\vspace{1pt}

\resizebox{0.88\textwidth}{!}{%
\begin{tabular}{l|ccc|ccc}
\toprule
\textbf{Method} & $\text{ATE}_{\text{orig}}$ mean$\pm$std$\downarrow$ & worst$\downarrow$ & wins$\uparrow$ & RPE-r mean$\pm$std$\downarrow$ & worst$\downarrow$ & wins$\uparrow$ \\
\midrule
CUT3R & 0.493 $\pm$ 0.318 & 1.137 & 0 & 0.486 $\pm$ 0.240 & 0.862 & 0 \\
TTT3R & 0.214 $\pm$ 0.097 & 0.427 & 0 & 0.387 $\pm$ 0.158 & 0.588 & 0 \\
TTT3R+Reset & 0.221 $\pm$ 0.060 & 0.290 & 0 & \textbf{0.338 $\pm$ 0.113} & \textbf{0.483} & \textbf{5} \\
\oursrow
\oursname & \textbf{0.107 $\pm$ 0.029} & \textbf{0.136} & \textbf{8} & 0.362 $\pm$ 0.148 & 0.581 & 3 \\
\bottomrule
\end{tabular}}

\vspace{4pt}
{\small (b) \textbf{Bonn-400 metric depth}}
\vspace{1pt}

\resizebox{0.88\textwidth}{!}{%
\begin{tabular}{l|ccc|ccc}
\toprule
\textbf{Method} & Abs Rel mean$\pm$std$\downarrow$ & worst$\downarrow$ & wins$\uparrow$ & $\delta{<}1.25$ mean$\pm$std$\uparrow$ & worst$\uparrow$ & wins$\uparrow$ \\
\midrule
CUT3R & 0.111 $\pm$ 0.049 & 0.190 & 0 & 88.3 $\pm$ 12.7 & 64.1 & 1 \\
TTT3R & 0.108 $\pm$ 0.053 & 0.196 & 0 & 90.3 $\pm$ 10.2 & 71.6 & 1 \\
TTT3R+Reset & 0.097 $\pm$ 0.040 & 0.148 & 2 & 92.4 $\pm$ 6.5 & 82.1 & 1 \\
\oursrow
\oursname & \textbf{0.092 $\pm$ 0.035} & \textbf{0.139} & \textbf{3} & \textbf{93.8 $\pm$ 4.4} & \textbf{87.5} & \textbf{2} \\
\bottomrule
\end{tabular}}
\end{table*}

\begin{table*}[!tb]
\centering
\caption{\textbf{Long-horizon video depth on KITTI.}
(a) Per-sequence scale alignment and (b) metric-scale evaluation are both reported on prepared KITTI validation clips truncated to 300, 400, and 500 frames.}
\label{tab:depth_kitti_long}

{\small (a) \textbf{Per-sequence scale} (KITTI-300 / KITTI-400 / KITTI-500)}
\vspace{1pt}

\resizebox{\textwidth}{!}{%
\begin{tabular}{l | ccc | ccc | ccc}
\toprule
\multirow{2}{*}{\textbf{Method}} &
\multicolumn{3}{c|}{\textbf{KITTI-300}} &
\multicolumn{3}{c|}{\textbf{KITTI-400}} &
\multicolumn{3}{c}{\textbf{KITTI-500}}\\
\cmidrule(lr){2-4}\cmidrule(lr){5-7}\cmidrule(lr){8-10}
& Abs Rel$\downarrow$ & $\delta{<}1.25\uparrow$ & log RMSE$\downarrow$ & Abs Rel$\downarrow$ & $\delta{<}1.25\uparrow$ & log RMSE$\downarrow$ & Abs Rel$\downarrow$ & $\delta{<}1.25\uparrow$ & log RMSE$\downarrow$ \\
\midrule
CUT3R       & 0.122 & 87.5 & 0.165 & 0.127 & 86.9 & 0.170 & 0.130 & 86.5 & 0.175 \\
TTT3R       & 0.111 & 90.3 & 0.152 & 0.115 & 89.7 & 0.159 & 0.118 & 89.2 & 0.165 \\
\oursrow
\oursname   & \textbf{0.108} & \textbf{91.1} & \textbf{0.149} & \textbf{0.112} & \textbf{90.2} & \textbf{0.156} & \textbf{0.116} & \textbf{89.6} & \textbf{0.162} \\
\bottomrule
\end{tabular}}

\vspace{4pt}
{\small (b) \textbf{Metric scale} (KITTI-300 / KITTI-400 / KITTI-500)}
\vspace{1pt}

\resizebox{\textwidth}{!}{%
\begin{tabular}{l | ccc | ccc | ccc}
\toprule
\multirow{2}{*}{\textbf{Method}} &
\multicolumn{3}{c|}{\textbf{KITTI-300}} &
\multicolumn{3}{c|}{\textbf{KITTI-400}} &
\multicolumn{3}{c}{\textbf{KITTI-500}}\\
\cmidrule(lr){2-4}\cmidrule(lr){5-7}\cmidrule(lr){8-10}
& Abs Rel$\downarrow$ & $\delta{<}1.25\uparrow$ & log RMSE$\downarrow$ & Abs Rel$\downarrow$ & $\delta{<}1.25\uparrow$ & log RMSE$\downarrow$ & Abs Rel$\downarrow$ & $\delta{<}1.25\uparrow$ & log RMSE$\downarrow$ \\
\midrule
CUT3R       & 0.133 & 83.8 & 0.179 & 0.143 & 82.0 & 0.187 & 0.151 & 80.3 & 0.194 \\
TTT3R       & \textbf{0.117} & \textbf{88.6} & \textbf{0.161} & \textbf{0.125} & \textbf{87.3} & \textbf{0.169} & \textbf{0.132} & \textbf{86.5} & \textbf{0.176} \\
\oursrow
\oursname   & 0.119 & \textbf{88.6} & \textbf{0.161} & 0.127 & \textbf{87.3} & \textbf{0.169} & 0.134 & 86.2 & 0.177 \\
\bottomrule
\end{tabular}}
\end{table*}

\section{Complete short-horizon tables}
\label{app:depth_short_full}

Tables~\ref{tab:depth_short_full_app} and~\ref{tab:recon_short} present the full short-horizon comparisons for depth and reconstruction, respectively.
These tables include optimization-based and full-attention baselines in addition to the streaming methods compared in the main paper.

On short-horizon depth (Table~\ref{tab:depth_short_full_app}), FILT3R is the strongest streaming method on Sintel and Bonn under metric-scale evaluation and remains competitive on KITTI; under per-sequence scale alignment it is competitive across all three datasets.

\paragraph{Scope of the empirical claim.}
The short-horizon gains are smaller than the long-horizon ones, so our central empirical claim is long-horizon stabilization rather than universal dominance across all regimes.

\begin{table*}[!tb]
\centering
\caption{\textbf{Video depth estimation on short sequences (full comparison).}
Both per-sequence scale and metric-scale results are shown. Rows marked with~\prior\ are reproduced from prior work.}
\label{tab:depth_short_full_app}
\resizebox{\linewidth}{!}{%
\begin{tabular}{llc|cc|cc|cc}
\toprule
\textbf{Alignment} & \textbf{Method} & \textbf{Type} &
\multicolumn{2}{c|}{\textbf{Sintel}} &
\multicolumn{2}{c|}{\textbf{Bonn}} &
\multicolumn{2}{c}{\textbf{KITTI}}\\
\cmidrule(lr){4-5}\cmidrule(lr){6-7}\cmidrule(lr){8-9}
 &  &  & Abs Rel$\downarrow$ & $\delta{<}1.25\uparrow$ & Abs Rel$\downarrow$ & $\delta{<}1.25\uparrow$ & Abs Rel$\downarrow$ & $\delta{<}1.25\uparrow$\\
\midrule
\multirow{14}{*}{Per-seq.\ scale}
& DUSt3R-GA\prior~\cite{wang2024dust3r}    & Optim  & 0.656 & 45.2 & 0.155 & 83.3 & 0.144 & 81.3\\
& MASt3R-GA\prior~\cite{leroy2024mast3r}    & Optim  & 0.641 & 43.9 & 0.252 & 70.1 & 0.183 & 74.5\\
& MonST3R-GA\prior~\cite{zhang2025monst3r}   & Optim  & 0.378 & 55.8 & 0.067 & 96.3 & 0.168 & 74.4\\
& Easi3R\prior~\cite{chen2025easi3r}       & FA     & 0.377 & 55.9 & 0.059 & 97.0 & 0.102 & 91.2\\
& VGGT\prior         & FA     & 0.287 & 66.1 & 0.055 & 97.1 & 0.070 & 96.5\\
& DA3~\cite{lin2025depthanything3} & FA & 0.278 & 70.3 & 0.052 & 97.2 & 0.045 & 98.0 \\
\cmidrule(lr){2-9}
& Spann3R\prior~\cite{wang2024spann3r}      & Stream & 0.622 & 42.6 & 0.144 & 81.3 & 0.198 & 73.7\\
& Point3R\prior      & Stream & 0.452 & 48.9 & 0.060 & 96.0 & 0.136 & 84.2\\
& CUT3R\prior        & Stream & 0.421 & 47.9 & 0.078 & 93.7 & 0.118 & 88.1\\
& TTT3R\prior        & Stream & 0.405 & 48.9 & 0.069 & 95.4 & 0.114 & 90.4\\
& STREAM3R$^{\alpha}$\prior~\cite{lan2025stream3r} & Stream & 0.478 & 51.1 & 0.075 & 94.1 & 0.116 & 89.6\\
& StreamVGGT\prior~\cite{zhuo2025streamvggt}   & Stream & \textbf{0.323} & \textbf{65.7} & \textbf{0.059} & \textbf{97.2} & 0.173 & 72.1\\
& MUT3R\prior        & Stream & 0.451 & 48.6 & 0.070 & 96.2 & 0.116 & 88.3\\
& FILT3R (ours)      & Stream & \underline{0.407} & \underline{54.5} & \underline{0.061} & \underline{97.0} & \textbf{0.110} & \textbf{91.0}\\
\midrule
\multirow{8}{*}{Metric scale}
& MASt3R-GA\prior    & Optim  & 1.022 & 14.3 & 0.272 & 70.6 & 0.467 & 15.2\\
& DA3~\cite{lin2025depthanything3} & FA & 0.529 & 6.2  & 0.147 & 88.8 & 0.237 & 28.5 \\
\cmidrule(lr){2-9}
& CUT3R\prior        & Stream & 1.029 & 23.8 & 0.103 & 88.5 & 0.122 & 85.5\\
& Point3R\prior      & Stream & 0.777 & 17.1 & 0.137 & 94.7 & 0.191 & 73.8\\
& TTT3R              & Stream & 0.977 & 24.5 & 0.090 & 94.2 & 0.110 & 89.1\\
& STREAM3R$^{\alpha}$\prior & Stream & 1.041 & 21.0 & 0.084 & 94.4 & 0.234 & 57.6\\
& MUT3R\prior        & Stream & 0.820 & 25.2 & 0.086 & 96.0 & 0.125 & 85.8\\
& FILT3R (ours)      & Stream & \textbf{0.772} & \textbf{27.3} & \textbf{0.067} & \textbf{96.7} & \underline{0.115} & 88.8 \\
\bottomrule
\end{tabular}}
\end{table*}

\begin{table*}[!tb]
\centering
\caption{\textbf{3D reconstruction on 7-Scenes at short sequences (100 and 200 frames).}
Even near the training horizon, FILT3R is already competitive with or slightly better than prior streaming updates, though the gap is considerably smaller than in the long-horizon settings reported above.}
\label{tab:recon_short}
\resizebox{\textwidth}{!}{%
\begin{tabular}{l | cccccc | cccccc}
\toprule
\multirow{3}{*}{\textbf{Method}} & \multicolumn{6}{c|}{\textbf{Length 100}} & \multicolumn{6}{c}{\textbf{Length 200}} \\
\cmidrule(lr){2-7} \cmidrule(lr){8-13}
 & \multicolumn{2}{c}{Acc$\downarrow$} & \multicolumn{2}{c}{Comp$\downarrow$} & \multicolumn{2}{c|}{NC$\uparrow$} & \multicolumn{2}{c}{Acc$\downarrow$} & \multicolumn{2}{c}{Comp$\downarrow$} & \multicolumn{2}{c}{NC$\uparrow$} \\
\cmidrule(lr){2-3} \cmidrule(lr){4-5} \cmidrule(lr){6-7} \cmidrule(lr){8-9} \cmidrule(lr){10-11} \cmidrule(lr){12-13}
 & Mean & Med. & Mean & Med. & Mean & Med. & Mean & Med. & Mean & Med. & Mean & Med. \\
\midrule
CUT3R         & 0.033 & 0.018 & 0.022 & 0.005 & 0.592 & 0.644 & 0.089 & 0.053 & 0.048 & 0.017 & 0.566 & 0.600 \\
TTT3R         & 0.020 & 0.008 & \textbf{0.016} & \textbf{0.004} & \textbf{0.596} & \textbf{0.649} & 0.028 & 0.015 & 0.023 & 0.005 & 0.582 & 0.627 \\
\oursrow
\oursname     & \textbf{0.017} & \textbf{0.006} & 0.017 & \textbf{0.004} & 0.595 & 0.648 & \textbf{0.019} & \textbf{0.008} & \textbf{0.022} & \textbf{0.004} & \textbf{0.584} & \textbf{0.629} \\
\bottomrule
\end{tabular}}
\end{table*}

\section{Hyperparameter details}
\label{app:hyperparams}
Table~\ref{tab:hyperparams} lists the complete set of hyperparameters used by FILT3R.
These values are fixed across all tasks and benchmarks reported in both the main paper and this appendix.
As discussed in Sec.~\ref{sec:gain_floor}, the positive $q_{\min}$ together with gain clamping implies that the deployed filter has a finite stable-regime gain floor rather than an asymptotically vanishing gain.

\begin{table}[!ht]
\centering
\caption{\textbf{FILT3R hyperparameters.}}
\label{tab:hyperparams}
\resizebox{0.6\linewidth}{!}{%
\begin{tabular}{lll}
\toprule
\textbf{Symbol} & \textbf{Description} & \textbf{Value} \\
\midrule
$p_0$ & Initial per-token variance & 1.5 \\
$k_{\min}, k_{\max}$ & Kalman-style gain clamp bounds & 0.01, 0.99 \\
$q_{\min}$, $q_{\max}$ & Process noise bounds & 0.02, 0.5 \\
$\alpha_q$, $\tau_q$ & Process-noise sigmoid parameters & 20, 3.0 \\
$r$ & Fixed measurement-noise scalar & 1.0 \\
$\lambda_\Delta$ & Drift EMA rate & 0.05 \\
$\Delta_{\mathrm{floor}}$ & Drift EMA floor & 0.01 \\
$\varepsilon$ & Numerical stability constant & $10^{-6}$ \\
\bottomrule
\end{tabular}}
\end{table}

\section{Additional qualitative evidence of long-horizon stability}
\label{app:qualitative_long}

\begin{figure}[!b]
\centering
\includegraphics[width=0.8\linewidth]{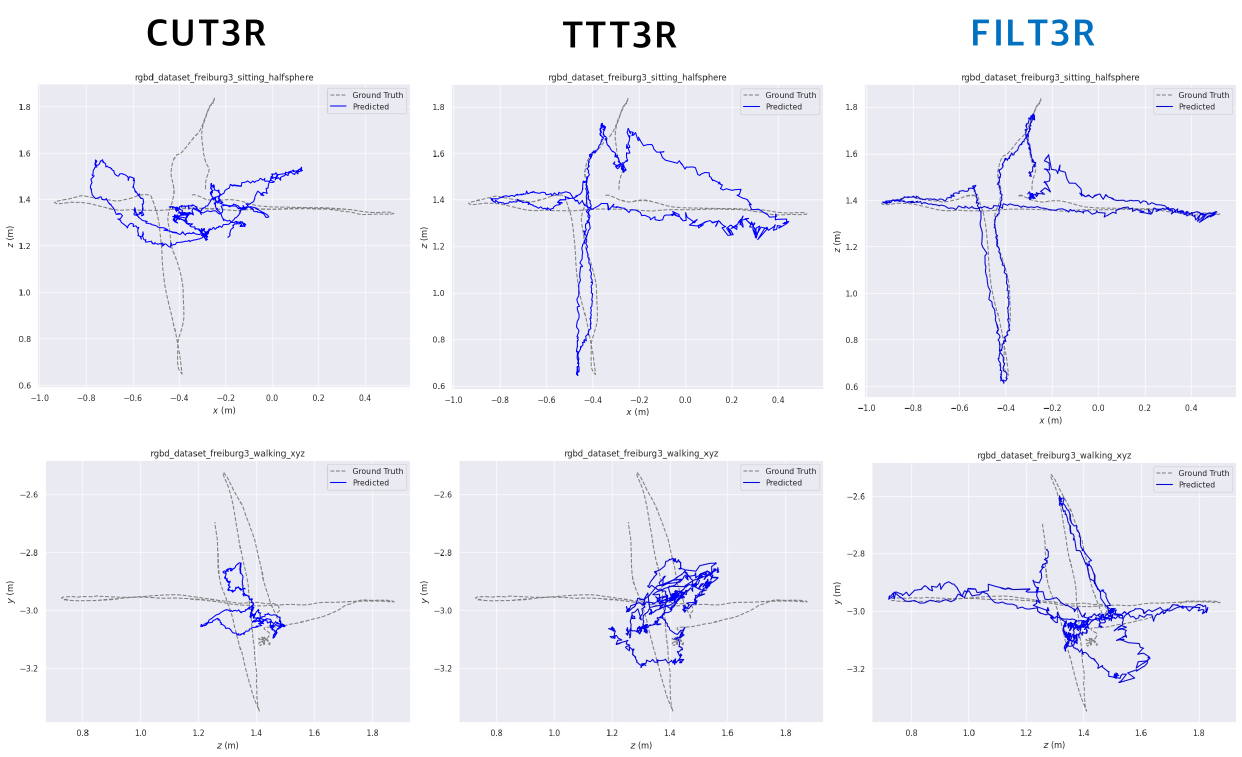}
\caption{\textbf{TUM-RGBD trajectory visualizations.}
FILT3R stays closer to the ground-truth trajectory over long horizons than both the overwrite and heuristic-gating baselines, particularly after the rollout extends well beyond the training horizon.}
\label{fig:pose_tum_qualitative}
\end{figure}

\begin{figure}[!tb]
\includegraphics[width=0.98\linewidth]{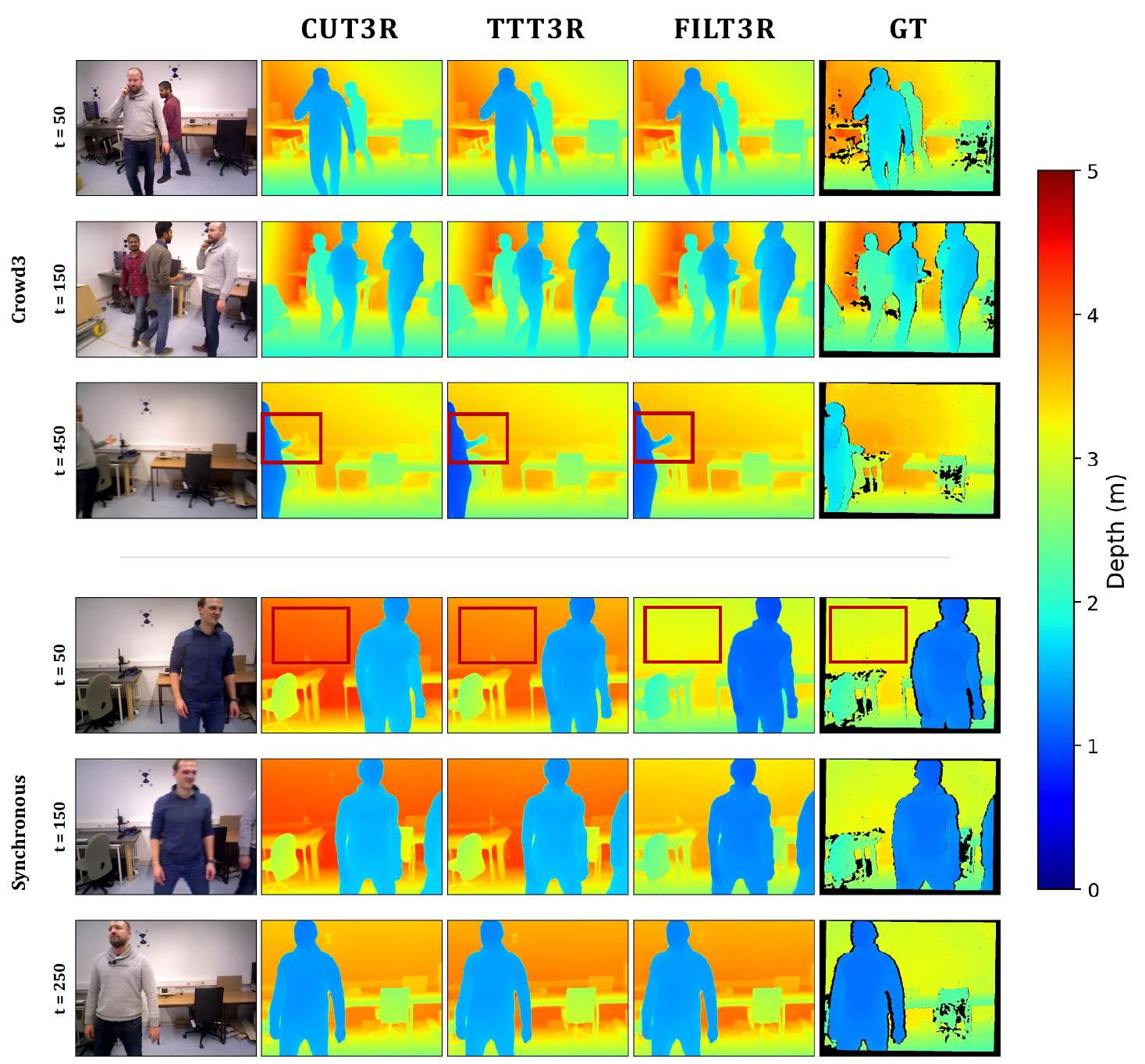}
\caption{\textbf{Bonn depth prediction visualizations.}
FILT3R stays closer to the ground-truth depth over long horizons than both the overwrite and heuristic-gating baselines, producing more metric-aligned depth predictions with sharper detail.}
\label{fig:pose_bonn_qualitative}
\end{figure}

\FloatBarrier
\clearpage

\section{FILT3R on fast-weight memory (tttLRM): integration and detailed results}
\label{app:tttlrm}

This appendix expands the fast-weight generality experiment summarized in \cref{sec:generality} and \cref{tab:tttlrm_len}.
We detail the integration mechanism, the full length- and chunk-axis benchmarks on DL3DV-140, and the induced gain dynamics.
All runs use the public tttLRM autoregressive checkpoint with \emph{frozen} learned parameters; FILT3R changes only the fast-weight update rule and adds no trainable parameters.

\subsection{Integration into the LaCT/Muon fast-weight update}
\label{app:tttlrm_integration}
tttLRM~\cite{wang2026tttlrm} represents per-scene memory as a set of \emph{fast weights} $\mathbf{W}=\{\mathbf{W}_0,\mathbf{W}_1,\mathbf{W}_2\}$ that are refreshed as the autoregressive scan ingests views, using a LaCT/Muon test-time transition.
We map FILT3R onto this memory by treating the committed fast weights $\mathbf{W}_{t-1}$ as the latent state and the post-transition candidate $\tilde{\mathbf{W}}_t$ as a noisy measurement, then applying the same scalar Kalman-style recursion (\cref{eq:p_pred,eq:kalman_gain,eq:state_update_main,eq:p_update}) per weight group:
\begin{align}
d_t &= \frac{\mathrm{RMS}(\tilde{\mathbf{W}}_t-\mathbf{W}_{t-1})}{\mathrm{RMS}(\mathbf{W}_{t-1})}, &
\hat d_t &= \beta_d\,\hat d_{t-1}+(1-\beta_d)\,d_t, \\
\mathbf{q}_t &= q_{\mathrm{base}}\cdot\mathrm{clip}\!\big(d_t/\hat d_t,\ q_{\min},q_{\max}\big), &
\mathbf{p}^-_t &= \mathrm{clip}\!\big(\mathbf{p}_{t-1}+\mathbf{q}_t,\ 10^{-6},\,10^{2}\big), \\
\mathbf{k}_t &= \mathrm{clip}\!\big(\mathbf{p}^-_t/(\mathbf{p}^-_t+r),\ k_{\min},k_{\max}\big), &
\mathbf{W}_t &= \mathbf{W}_{t-1}+\mathbf{k}_t\odot(\tilde{\mathbf{W}}_t-\mathbf{W}_{t-1}),
\end{align}
with the update $\mathbf{p}_t=(1-\mathbf{k}_t)\,\mathbf{p}^-_t$.
Here the relative RMS change $d_t$ plays the role of the candidate temporal drift in the main method (\cref{eq:drift}).
Three implementation points are worth noting.
\begin{itemize}
\item \textbf{Granularity.} In \emph{scalar} mode we keep one $(\mathbf{p},\mathbf{k})$ per attention head per weight matrix; in \emph{row-wise} mode we keep a per-input-row $(\mathbf{p},\mathbf{k})$, giving per-row credit assignment. Row-wise is the FILT3R-FW variant reported in \cref{tab:tttlrm_len}. In the default scale the scalar variant's gain floor binds and it reduces \emph{exactly} to a constant-gain EMA, which is the ``Fixed-gain EMA'' control throughout.
\item \textbf{Norm convention.} tttLRM keeps a column-norm convention on the fast weights; after the filtered mix we renormalize each column to the candidate's norm, so filtering changes the direction and magnitude of the \emph{applied} update while preserving the architecture's norm invariant.
\item \textbf{Warm-up.} The first $w$ chunks are committed unfiltered to seed the memory, after which filtering engages; the best length-128 result uses $w{=}0$ (no warm-up).
\end{itemize}
All FILT3R operations run under \texttt{no\_grad} after the existing transition, adding negligible compute and ${\approx}\,300$\,MiB of state.

\subsection{Length scaling at the standard update step}
\Cref{tab:tttlrm_len} reports the DL3DV-140 benchmark at the standard LaCT chunk size, across sequence lengths $16,64,128$.
The within-length PSNR gain over the unmodified autoregressive update grows monotonically with length, from $+0.05$\,dB at length 16 to $+1.08$\,dB at length 128, with SSIM/LPIPS moving in the same direction and 127--140 of the 140 scenes improving.
Absolute PSNR is not directly comparable across lengths because the target-view sampling protocol differs; the within-length $\Delta$ is the relevant signal.

\begin{table}[!tb]
  \centering
  \caption{\textbf{tttLRM length scaling at the standard LaCT step (DL3DV-140).}
  \emph{Original-AR} is the unmodified autoregressive update; \emph{Fixed-gain EMA} is the scalar floor-binding control; \emph{+\,FILT3R-FW} is the row-wise adaptive variant. The \emph{wins} column counts scenes (out of 140) beating Original-AR. The gain $\Delta$ grows with sequence length.}
  \label{tab:tttlrm_len}
  \resizebox{\linewidth}{!}{%
  \begin{tabular}{c|l|ccccc}
    \toprule
    Len. & Method & PSNR$\uparrow$ & $\Delta$ & SSIM$\uparrow$ & LPIPS$\downarrow$ & wins \\
    \midrule
    \multirow{3}{*}{16}
      & \textit{Original-AR} & 23.0966 & -- & 0.7638 & 0.2681 & -- \\
      & Fixed-gain EMA & 23.1423 & +0.0457 & 0.7648 & 0.2677 & 127/140 \\
      \oursrow & +\,FILT3R-FW & \textbf{23.1457} & \textbf{+0.0491} & \textbf{0.7648} & \textbf{0.2677} & \textbf{127/140} \\
    \midrule
    \multirow{3}{*}{64}
      & \textit{Original-AR} & 24.8116 & -- & 0.8140 & 0.2253 & -- \\
      & Fixed-gain EMA & 24.9444 & +0.1328 & 0.8173 & 0.2230 & -- \\
      \oursrow & +\,FILT3R-FW & \textbf{24.9647} & \textbf{+0.1531} & \textbf{0.8192} & \textbf{0.2220} & -- \\
    \midrule
    \multirow{3}{*}{128}
      & \textit{Original-AR} & 23.1710 & -- & 0.7794 & 0.2676 & -- \\
      & Fixed-gain EMA & 23.5922 & +0.4212 & 0.7936 & 0.2527 & \textbf{140/140} \\
      \oursrow & +\,FILT3R-FW & \textbf{24.2536} & \textbf{+1.0826} & \textbf{0.8138} & \textbf{0.2314} & \textbf{140/140} \\
    \bottomrule
  \end{tabular}}
\end{table}

\subsection{Chunk-axis stress test: catastrophic forgetting and rescue}
The dominant FILT3R signal appears not by adding frames but by \emph{shrinking the LaCT update step}, i.e.\ forcing more, smaller fast-weight updates per scene (controlled by \texttt{miniupdate\_views}).
\Cref{tab:tttlrm_chunks} sweeps this chunk axis at two sequence lengths.
As the number of chunks grows from 4 to 16, the unfiltered fast weights suffer a ${\approx}\,{-}5.8$\,dB collapse---the canonical catastrophic-forgetting failure in which repeated small over-writes corrupt the memory---and the collapse magnitude is essentially independent of sequence length, identifying it as a property of the recurrent fast-weight dynamics rather than a frame-count artifact.
FILT3R recovers a large fraction of this loss: up to $+1.618$\,dB (length 16) and $+1.542$\,dB (length 20) at the finest step, on \emph{all} 140 scenes.
Row-wise FILT3R increasingly dominates the scalar/EMA control as chunks proliferate (tied at 4 chunks; more than $2\times$ the gain at 16 chunks), consistent with the per-row gain behavior analyzed next.

\begin{table}[!tb]
  \centering
  \caption{\textbf{Chunk-axis stress test on DL3DV-140.}
  Shrinking the LaCT update step (more chunks) collapses the unfiltered baseline by ${\approx}\,5.8$\,dB; FILT3R-FW (row-wise) recovers up to $+1.6$\,dB on all 140 scenes. Baseline is absolute PSNR; the two FILT3R columns are $\Delta$PSNR over the baseline.}
  \label{tab:tttlrm_chunks}
  \resizebox{\linewidth}{!}{%
  \begin{tabular}{c|c|c|cc|c}
    \toprule
    Len. & chunks (filtered) & \textit{Original-AR} & Fixed-gain EMA $\Delta$ & +\,FILT3R-FW $\Delta$ & wins \\
    \midrule
    \multirow{3}{*}{16}
      & 4 (3) & 23.097 & +0.046 & +0.049 & 127/140 \\
      & 8 (7) & 21.290 & +0.480 & \textbf{+0.764} & \textbf{140/140} \\
      & 16 (15) & 17.255 & +0.702 & \textbf{+1.618} & \textbf{140/140} \\
    \midrule
    \multirow{3}{*}{20}
      & 5 (4) & 23.597 & +0.050 & +0.052 & 130/140 \\
      & 10 (9) & 21.588 & +0.530 & \textbf{+0.923} & \textbf{140/140} \\
      & 20 (19) & 17.724 & +0.605 & \textbf{+1.542} & \textbf{140/140} \\
    \bottomrule
  \end{tabular}}
\end{table}

\subsection{Induced gain dynamics}
\Cref{tab:tttlrm_gain} logs the row-wise gain over the 15 filtered chunks at the finest update step.
The mean gain anneals from $0.90$ toward $0.65$ as evidence accumulates---mirroring the $\mathcal{O}(1/t)$-style contraction analyzed in Appendix~\ref{app:memory_horizon}, now realized on fast weights---while the per-row spread grows from $0$ to ${\approx}\,0.046$ (late-chunk range $[0.51,0.79]$).
Some rows are strongly retained ($k{\approx}0.5$, a $50/50$ mix) while others keep updating ($k{\approx}0.79$): the adaptive process noise turns the update into a per-row low-pass filter on the noisy update direction, which is the mechanism behind the rescue in \cref{tab:tttlrm_chunks}.

\begin{table}[!tb]
  \centering
  \caption{\textbf{Row-wise gain trajectory at the finest update step} (15 filtered chunks, logged on a DL3DV sample). The mean gain anneals downward and the per-row spread grows as consistent evidence accumulates.}
  \label{tab:tttlrm_gain}
  \resizebox{0.7\linewidth}{!}{%
  \begin{tabular}{c|cccc}
    \toprule
    chunk & $k_{\mathrm{mean}}$ & $k_{\min}$ & $k_{\max}$ & per-row $k_{\mathrm{std}}$ \\
    \midrule
    1 & 0.900 & 0.900 & 0.900 & 0.000 \\
    2 & 0.827 & 0.731 & 0.830 & 0.010 \\
    3 & 0.798 & 0.674 & 0.829 & 0.025 \\
    4 & 0.769 & 0.652 & 0.826 & 0.027 \\
    5 & 0.738 & 0.590 & 0.826 & 0.040 \\
    6 & 0.712 & 0.551 & 0.826 & 0.048 \\
    7 & 0.694 & 0.542 & 0.820 & 0.045 \\
    8 & 0.682 & 0.536 & 0.808 & 0.044 \\
    $\vdots$ & $\vdots$ & $\vdots$ & $\vdots$ & $\vdots$ \\
    15 & 0.645 & 0.515 & 0.793 & 0.046 \\
    \bottomrule
  \end{tabular}}
\end{table}

\subsection{Hyperparameters}
\Cref{tab:tttlrm_hparams} lists the fast-weight FILT3R configuration.
Because the fast-weight state has a different scale and update cadence than the CUT3R latent, the noise constants differ from the main-paper values in \cref{tab:hyperparams}; within the tttLRM experiments a single configuration is used for all lengths and chunk settings.
The scalar ``Fixed-gain EMA'' control uses the same recursion in scalar mode with a higher $r$ so that the gain floor binds and the gain is effectively constant.

\begin{table}[!ht]
  \centering
  \caption{\textbf{tttLRM FILT3R hyperparameters.} The drift clip $(q_{\min},q_{\max})$ bounds the normalized ratio $d_t/\hat d_t$ before scaling by $q_{\mathrm{base}}$.}
  \label{tab:tttlrm_hparams}
  \resizebox{0.9\linewidth}{!}{%
  \begin{tabular}{l|cc}
    \toprule
    \textbf{Setting} & \textbf{FILT3R-FW (row-wise)} & \textbf{Fixed-gain EMA (scalar)} \\
    \midrule
    granularity & row-wise & scalar \\
    $q_{\mathrm{base}}$ & 0.01 & --- (gain floor binds) \\
    drift clip $(q_{\min},q_{\max})$ & $(0.25,\ 4.0)$ & --- \\
    measurement noise $r$ & 0.01 & 1.0 \\
    initial variance $p_0$ & 0.05 & (floor-binding) \\
    gain clamp $(k_{\min},k_{\max})$ & $(0.5,\ 0.995)$ & $(0.88,\ 0.995)$ \\
    drift EMA rate $\beta_d$ & 0.90 & 0.90 \\
    warm-up chunks $w$ & 1 (0 at length 128) & 1 \\
    \bottomrule
  \end{tabular}}
\end{table}

\FloatBarrier

\clearpage
\section{Generality to pointer memory (Point3R)}
\label{app:memory_designs}
This section provides the full memory-design comparison summarized in \cref{sec:generality}, evaluated on 7-Scenes-300.
Beyond the compact CUT3R latent and the tttLRM fast-weight memory (Appendix~\ref{app:tttlrm}), we test FILT3R on an \emph{explicit spatial pointer} memory and position it against an \emph{attention/KV-cache} memory.
Point3R~\cite{wu2025point3r} maintains an explicit spatial pointer memory; replacing only its pointer-fusion rule with a filtered update---leaving matching, grouping, and the decoder unchanged---improves Accuracy and Completeness on 7-Scenes-300 (\cref{tab:sevenscenes}), confirming that FILT3R generalizes across \emph{fast-weight}, \emph{explicit-pointer}, and \emph{compact-latent} memories.
\Cref{tab:sevenscenes} also compares against InfiniteVGGT~\cite{infinitevggt}, an attention/KV-cache memory: the compact CUT3R+FILT3R model attains the best Acc and Comp and is within $0.002$ NC, while running at constant memory and higher throughput than the attention baseline (FILT3R: $3.5$\,GB, $25$\,fps, cf.\ \cref{tab:efficiency}; InfiniteVGGT: $18.5$\,GB, $6.5$\,fps).

\begin{table}[!htb]
  \centering
  \caption{\textbf{7-Scenes-300 comparison across memory designs.}
  Point3R validates the pointer-memory fusion variant of FILT3R, while InfiniteVGGT provides an attention/KV-cache memory comparison. The main CUT3R+FILT3R model retains a compact latent memory and achieves the best Acc/Comp, with NC within $0.002$ of the heavier attention-memory baseline.}
  \label{tab:sevenscenes}
  \resizebox{\linewidth}{!}{%
  \begin{tabular}{l|l|ccc}
    \toprule
    Method & Memory form & Acc$\downarrow$ & Comp$\downarrow$ & NC$\uparrow$ \\
    \midrule
    Point3R & explicit pointer memory & 0.049 & 0.029 & 0.562 \\
    \oursrow Point3R + FILT3R & filtered pointer fusion & 0.042 & 0.024 & 0.563 \\
    InfiniteVGGT & attention/KV-cache memory & 0.040 & 0.025 & \textbf{0.570} \\
    \oursrow \oursname{} (main) & compact latent memory & \textbf{0.020} & \textbf{0.022} & 0.568 \\
    \bottomrule
  \end{tabular}}
\end{table}

\FloatBarrier

\end{document}